\documentclass[twoside]{article}

\usepackage[accepted]{aistats2020}
\usepackage[utf8]{inputenc} 
\usepackage{url}            
\usepackage{booktabs}       
\usepackage{amsfonts}       
\usepackage{nicefrac}       
\usepackage{microtype}      
\usepackage[round]{natbib}
\usepackage{verbatim}
\usepackage{tabu}

\usepackage{algorithm}
\usepackage{algorithmic}

\usepackage{amsmath}
\usepackage{amssymb}
\usepackage{amsthm}
\usepackage{bm}
\usepackage{bbm}
\usepackage{setspace}
\usepackage{xspace}
\usepackage{xcolor} 
\usepackage{wrapfig}
\usepackage{array} 
\usepackage{multirow}
\usepackage{enumitem}
\usepackage{stmaryrd}
\usepackage{mdframed}

\newcommand{\alinea}{\hspace{1em}\textbullet\ }
\usepackage{thm-restate}
\usepackage{caption}
\usepackage{subcaption}

\usepackage[colorinlistoftodos,disable, textwidth=20mm]{todonotes}

\definecolor{blued}{RGB}{70,197,221}
\definecolor{pearOne}{HTML}{2C3E50}
\definecolor{pearTwo}{HTML}{A9CF54}
\definecolor{pearTwoT}{HTML}{C2895B}
\definecolor{pearThree}{HTML}{E74C3C}
\colorlet{titleTh}{pearOne}
\colorlet{bull}{pearTwo}
\definecolor{pearcomp}{HTML}{B97E29}
\definecolor{pearFour}{HTML}{588F27}
\definecolor{pearFith}{HTML}{ECF0F1}
\definecolor{pearDark}{HTML}{2980B9}
\definecolor{pearDarker}{HTML}{1D2DEC}
\usepackage{hyperref}       
\hypersetup{
	colorlinks,
	citecolor=pearDark,
	linkcolor=pearThree,
	breaklinks=true,
	urlcolor=pearDarker}

\usepackage[most]{tcolorbox}

\definecolor{citrine}{rgb}{0.89, 0.82, 0.04}

\definecolor{graphicbackground}{rgb}{0.96,0.96,0.8}
\definecolor{rouge1}{RGB}{226,0,38}  
\definecolor{orange1}{RGB}{243,154,38}  
\definecolor{jaune}{RGB}{254,205,27}  
\definecolor{blanc}{RGB}{255,255,255} 
\definecolor{rouge2}{RGB}{230,68,57}  
\definecolor{orange2}{RGB}{236,117,40}  
\definecolor{taupe}{RGB}{134,113,127} 
\definecolor{gris}{RGB}{91,94,111} 
\definecolor{bleu1}{RGB}{38,109,131} 
\definecolor{bleu2}{RGB}{28,50,114} 
\definecolor{vert1}{RGB}{133,146,66} 
\definecolor{vert3}{RGB}{20,200,66} 
\definecolor{vert2}{RGB}{157,193,7} 
\definecolor{darkyellow}{RGB}{233,165,0}  
\definecolor{lightgray}{rgb}{0.9,0.9,0.9}
\definecolor{darkgray}{rgb}{0.6,0.6,0.6}
\definecolor{babyblue}{rgb}{0.54, 0.81, 0.94}
\definecolor{citrine}{rgb}{0.89, 0.82, 0.04}
\definecolor{misogreen}{rgb}{0.25,0.6,0.0}

\DeclareMathOperator*{\argmax}{arg\,max}
\DeclareMathOperator*{\argmin}{arg\,min}

\let\originalleft\left
\let\originalright\right
\renewcommand{\left}{\mathopen{}\mathclose\bgroup\originalleft}
\renewcommand{\right}{\aftergroup\egroup\originalright}

\newtheorem{assumption}{Assumption}
 \newtheorem{lemma}{Lemma}
 \newtheorem{theorem}{Theorem}
 \newtheorem{definition}{Definition}
 \newtheorem{corollary}{Corollary}

 \newtheorem{proposition}{Proposition}

\newcommand{\EE}[1]{\mathbb{E}\left[#1\right]}

\newcommand{\CommaBin}{\mathbin{\raisebox{0.5ex}{,}}}

\renewcommand{\epsilon}{\varepsilon}

\renewcommand{\tilde}{\widetilde}
\renewcommand{\bar}{\overline}

\newcommand{\nothere}[1]{}

\usepackage{xspace}
\renewcommand{\ttdefault}{lmtt}

\newcommand{\kometo}{\texttt{Kometo}\xspace}

\newcommand{\GPUCB}{\texttt{GP-UCB}\xspace}

\newcommand{\StoSOO}{\texttt{StoSOO}\xspace}
\newcommand{\POO}{\texttt{POO}\xspace}
\newcommand{\DOO}{\texttt{DOO}\xspace}
\newcommand{\SOO}{\texttt{SOO}\xspace}
\newcommand{\Zooming}{\texttt{Zooming}\xspace}

\newcommand{\MFPDOO}{\texttt{MFPDOO}\xspace}

\newcommand{\SequOOL}{\texttt{SequOOL}\xspace}
\newcommand{\StroquOOL}{\texttt{StroquOOL}\xspace}
\newcommand{\GPO}{\texttt{GPO}\xspace}

\newcommand{\HOO}{\texttt{HOO}\xspace}

\usepackage{authblk}

\AfterEndEnvironment{assumption}{\noindent\ignorespaces}
\AfterEndEnvironment{proposition}{\noindent\ignorespaces}
\AfterEndEnvironment{theorem}{\noindent\ignorespaces}
\AfterEndEnvironment{restatable}{\noindent\ignorespaces}
\AfterEndEnvironment{lemma}{\noindent\ignorespaces}
\AfterEndEnvironment{remark}{\noindent\ignorespaces}

\newcommand{\hmax}{\tilde{\Lambda}}

\newcommand{\jmax}{j_{\rm max}}
\newcommand{\invalpha}{\frac{1}{\alpha}}
\newcommand{\logrho}{\log\frac{1}{\rho}}

\newcommand{\tree}{\mathcal{T}}
\newcommand{\regretl}{r_\Lambda}
\newcommand{\htilde}{\tilde{h}}
\newcommand{\dalpharho}{(d+\invalpha )\logrho}
\newcommand{\abnu}{a_{b,\nu}}
\newcommand{\invbeta}{\frac{1}{\beta}}

\newcommand{\xstar}{x^\star}
\newcommand{\ncell}{N}
\newcommand{\bigO}{\mathcal{O}}
\newcommand{\bigOtilde}{\tilde{\mathcal{O}}}
\newcommand{\targetf}{f}
\newcommand{\Zeta}{Z}
\newcommand{\functionspace}{\mathcal{X}}
\newcommand{\finalvalue}{x_\Lambda}
\newcommand{\assumf}{Assumption~\ref{ass:opt}\xspace}
\newcommand{\assumsa}{Assumption~\ref{ass:smo}\ref{assumsa}\xspace}
\newcommand{\assumsb}{Assumption~\ref{ass:smo}\ref{assumsb}\xspace}
\newcommand{\assumsc}{Assumption~\ref{ass:smo}\ref{assumsc}\xspace}
\newcommand{\assumsymbol}{{\rm Asm}}
\newcommand{\optimum}{x^\star}
\newcommand{\policysymbol}{\pi}
\newcommand{\regretpolicy}{\regretl(\policysymbol)}
\newcommand{\fidfamily}{(f_z)_{z\in \Zeta}}
\newcommand{\assumset}{F(\assumsymbol,\targetf,\lambda)}
\newcommand{\partitioning}{\mathcal{P}}
\newcommand{\smoothset}{S(\partitioning, \nu,\rho,d,C)}
\newcommand{\partitioninghi}{\partitioning_{h,i}}
\newcommand{\trunctree}{T}
\newcommand{\trunctreea}{\trunctree_a}
\newcommand{\trunctreei}{\trunctree_i}
\newcommand{\trunctreeh}{\trunctreea^h}
\newcommand{\targettrunctree}{f^{\trunctreea}}
\newcommand{\fidtrunctree}{\targettrunctree_z}
\newcommand{\pointfidl}{X^{\trunctreea}_l}
\newcommand{\setfid}{S}
\newcommand{\setfidh}{\setfid^{\trunctreea}_h}
\newcommand{\trunctreeabis}{\trunctreea'}
\newcommand{\pointfidlbis}{X^{\trunctreeabis}_l\trunctree}
\newcommand{\setfidhbis}{S^{\trunctreeabis}_h}
\newcommand{\lsumi}{\sum\limits_{i=0}^{K^{s+1}-1}}
\newcommand{\lsuml}{\sum\limits_{l\in\mathbb{N}}}
\newcommand{\cmin}{C_{\min}}
\newcommand{\dmax}{d_{\max}}
\newcommand{\lambdac}{\tilde{\Lambda}_c}

\begin{document}

%

%

\renewcommand{\ttdefault}{lmtt}
\twocolumn[

\aistatstitle{Adaptive multi-fidelity optimization with fast learning rates}

\aistatsauthor{C\^{o}me Fiegel \And Victor Gabillon \And Michal Valko}

\aistatsaddress{\'Ecole Normale Sup\'erieure, Paris\\Inria Lille\And Huawei R\&D, UK\And Inria Lille }


]
\begin{abstract}
 In \textit{multi-fidelity optimization}, we have access to \textit{biased} approximations of varying costs of the target function. In this work, we study the setting of optimizing a locally smooth function with a limited budget $\Lambda$, where the learner has to make a trade-off between the cost and the bias of these approximations. We first prove lower bounds for the \textit{simple regret} under different assumptions on the fidelities, based on a \textit{cost-to-bias} function. We then present the \kometo algorithm which achieves, with additional logarithmic factors, the same rates \textit{without} any knowledge of the function smoothness and fidelity assumptions and improving prior results. Finally, we empirically show that our algorithm outperforms prior multi-fidelity optimization methods without the knowledge of problem-dependent parameters.
\end{abstract}


\section{Introduction}

In multi-fidelity optimization~\citep{cutler2014reinforcement,huang2006sequential,kandasamy2016multi,kandasamy2017multi}, the learner \textit{actively} optimizes a function but only observes, at each of the rounds, biased values of that function. The learner can \emph{pay} to reduce the bias of the observed function values. The smaller the bias the higher the cost, urging the learner to carefully allocate its total cost budget $\Lambda$ on the fly.

We consider the case of \textit{derivative-free optimization} where no gradient information is available~\citep{matyas1965random}. 
 This is of great interest for the multiple applications  in which it is either difficult to access, compute, or even define gradients \citep{nesterov2017random}.
 Using only zero-order information, derivative-free optimization addresses optimising over functions that are  not differentiable, non-continuous, or  non-smooth. Moreover, there are known  methods that work without knowing the smoothness parameters $(\nu,\rho)$ of the function \citep{auer2007improved,kleinberg2008multi,grill2015black-box,valko2013stochastic,bartlett2019simple}. 

Derivative-free multi-fidelity optimization is useful in particular for hyper-parameter tuning of \textit{complex} machine learning models, where each evaluation of the model is costly such as tokamak simulators.
However, the mapping between the hyper-parameter and performance of the learned model can be highly non-convex and non-smooth.
Moreover, training a model, given the hyper-parameters can be  expensive and time-consuming \citep{sen2018}. In a situation, where computation or time are constrained by a budget, these constraints prevent us from carefully evaluating the qualities of all the models generated from a continuous set of hyper-parameters.
Then,  given one fixed set of hyper-parameters,  the bias  of the estimation of the quality of fully-trained model is a (decreasing) function~$\zeta$ of the amount of computation resource spent training the model.
Ultimately, we would expect this bias to be zero if the model is trained until convergence. However, the bias function $\zeta$ is a function that depends on the type of trained models and that is in applications a priori unknown.

The most related approach for the considered setting is the \MFPDOO algorithm of
\citet{sen2018}. In order to provide theoretical guarantees for \MFPDOO, $\zeta$  is either assumed to be known or some parametric assumptions on $\zeta$ are made and the parameters are estimated online. However, knowing $\zeta$ or its parametric family is unrealistic.

In this paper, we 
propose a new method called \kometo that adapts to the unknown $\zeta$ and the unknown smoothness parameters $(\nu,\rho)$. 
Our analysis is more general than the analysis of \cite{sen2018} and provides a broader and finer set of behaviors of the cost-to-bias function. This allows us to provide a characterisation of the complexity of the problem by providing  the first regret lower bounds in multi-fidelity optimization. We also show that \kometo obtains rates that match the ones of our lower bounds and improves upon the rates of \MFPDOO \textit{while dropping the assumptions of knowing the bias function $\zeta$ in advance.}

\paragraph{Related work} 
Among the large work on derivative-free optimization, we focus on algorithms that perform well under \emph{minimal} assumptions as well as minimal knowledge of the function. 
Under 	\emph{weak}/\emph{local} smoothness around one global maximum \citep{auer2007improved,kleinberg2008multi,bubeck2011x},  
 some algorithms require the knowledge of the local smoothness such as \HOO \citep{bubeck2011x}, \Zooming~\citep{kleinberg2008multi}, or \DOO \citep{munos2011optimistic}. Among the work relying on an \emph{unknown} local  smoothness,
	 \SequOOL~\citep{bartlett2019simple} improves on \SOO \citep{munos2011optimistic,kawaguchi2016global} 
	 and represents the state-of-the-art for the deterministic feedback.
	For the stochastic feedback, \StoSOO \citep{valko2013stochastic} extends \SOO for a limited class of functions. \POO~\citep{grill2015black-box} and \GPO~\citep{shang2019general} provide more general results. Finally,
	\StroquOOL~\citep{bartlett2019simple} matches, up to log factors, the  guarantees of \SequOOL and \GPO for deterministic and stochastic feedback respectively, \textit{without requiring the knowledge of the range of the noise $b$.}
	
Multi-fidelity optimization is a well studied setting. Here, we address  \textit{online} multi-fidelity optimization. Many approaches rely on Bayesian models, e.g., Gaussian processes. \citet{zhang2019} relies on entropic search to find the maximum, while \citet{kanda2016mf} adapts \GPUCB \citep{srinivas2010gaussian} to multi-fidelity setting. Most of these methods need an access to a bias function, while \citet{ghosh2019} use the cost of the approximations to estimate its values.
\cite{li2017hyperband} obtains good empirical results by trying a lot of configurations at low fidelities and progressively eliminating the less interesting ones while using higher and higher fidelities.
Two prior works adapted algorithms working under local smoothness around one global maximum to multi-fidelity settings. First, \cite{sen2018} adapted  \POO \citep{grill2015black-box} to deterministic multi-fidelity settings and later \cite{sen2019} made it work under stochastic ones.

\paragraph{Main contributions}
\begin{itemize}
    \item We give more general assumptions on the fidelity approximations based on their cost while keeping the smoothness assumption on the target function.
    \item We prove lower bounds of the simple regret under these more general and different assumptions.
    \item We  provide \kometo, an algorithm that, in deterministic settings, without \textit{any} knowledge on the bias function and the smoothness of the target function, achieves minimax optimal rates for simple regret up to logarithmic factors on all considered assumptions on the fidelities. It improves the previously proven guaranteed rates under local smoothness assumptions of \cite{sen2018}, except in the case $\alpha=1$ of \assumsa,\footnote{hyperbolic decreasing of the cost-to-bias function} where it has  additional logarithmic factors. Our \kometo comes with important properties:
    \begin{itemize}
    \item  It does \textit{not} assume an access to the target function, only an access to increasingly better approximations, unlike previous algorithms as the ones of \cite{sen2018}.
    \item   It only uses the comparisons of evaluations at the same fidelity level, and not directly the values of the evaluations, which leads to  \textit{weaker fidelity assumptions and better empirical results.}
    \item   It works in stochastic settings by changing the number of evaluations at higher fidelities.
        \end{itemize}
    \item We provide synthetic experiments and a hyper-parameter tuning experiment to demonstrate the efficiency of  \kometo.
\end{itemize}
\section{Problem setting}
In this section, we introduce a generalization of the settings presented by \cite{sen2018}.

We want to optimize a target function $\targetf: \functionspace \rightarrow \mathbb{R}$ under a budget $\Lambda\in \mathbb{R_+}$. The evaluation of this target function is done through its fidelity approximations. We thus denote by $\Zeta=\left[0,1\right]$ the fidelity space and by $(f_z)_{z\in Z}$ the fidelity approximations. In particular, $z=0$ corresponds to the lowest fidelity, while $z=1$ corresponds to the highest one. We also denote by $\zeta : \Zeta \rightarrow \mathbb{\bar{R_+}}$ the unknown bias function, such that there exists a family $(g_z)_{z\in \Zeta}$ of real-valued strictly increasing function with $\Vert \targetf - g_z \circ f_{z} \Vert_{\infty}\leq\zeta(z)$ for $z \in \Zeta;$ motivations for this assumption are explained below.
A known cost function $\lambda : \Zeta \rightarrow \mathbb{\bar{R_+}}$ indicates the budget used at each evaluation for a given fidelity.
We also assume that the algorithm can request, for any $c\geq 1$, a fidelity $z_c$ such that $\lambda (z_c)\leq c$, and we define $\Phi:[1,+\infty[ \to \bar{\mathbb{R}_+}$ with $\Phi(c)=\zeta (z_c)$, the cost-to-bias function, which gives for each cost $c$ the minimal bias that one can be guaranteed for an observation of~$f$. Assumptions~\ref{ass:smo} below are made on this function.

At round $t$, the algorithm makes an evaluation of the function of a point $x_t\in \functionspace$ and at a fidelity $z_t\in \Zeta$ (or at a cost $c_t$, see above), as long as $\sum^t_{s=1}\lambda(z_s)\leq\Lambda$. The algorithm observes at round $t$ the value $f_{z_t}(x_t)$ in return. The algorithm must finally output a value $\finalvalue$. We then define the simple regret of a policy $\policysymbol$ for $\regretl$ as 
\[\regretpolicy \triangleq \EE{\max_{x\in \functionspace}\targetf (x) - \targetf(\finalvalue)},\]
where the expectation is taken over the randomness of the algorithm. In the rest of this paper, we only aim to minimize this regret, without any constraint on time or space complexity.

\paragraph{Problem setting remarks}
One of the main aspects our approach is that we \textit{do not assume to have an access to the bias function} $\zeta$. This highlights the fact that our algorithm is fully adaptive, and only needs the cost of each fidelity as an input. Since the bias function is usually unknown in practice, prior works rely on various techniques (e.g., MLE) to guess the values of this function for implementations, but often assume it has a specific form. Given the known results is therefore surprising we get faster rates, and we do it \textit{without relying on any information on the bias function}.

Moreover, we relax the original assumption of \citet{sen2018} that $\Vert \targetf - f_{z} \Vert_{\infty}\leq\zeta(z)$ for $z \in \Zeta$ and use instead $\Vert \targetf - g_z \circ f_{z} \Vert_{\infty}\leq\zeta(z)$ for $z \in \Zeta$.
This lets the $f_z$ approximations be potentially arbitrarily biased with respect to $f$ as long as the ordering in $f$ is approximately kept. Indeed, as $g_z$ are increasing functions,  we have that 
$g_z \circ f_{z} (x_1)  \geq  g_z \circ f_{z} (x_2)$ if and only if 
$ f_{z} (x_1)  \geq  f_{z} (x_2)$ for any $x_1,x_2\in\functionspace$.
 This more general model  for example fits in cases where evaluating at lower fidelities (with higher bias) has a great impact on individual feedback, but a low impact on how each different points compare to each other at the same fidelity level. This is for example the case in neural network training, where evaluating with fewer iterations (lower fidelity) may increase the overall error for every set of hyper-parameters at similar rates.
Note that theoretical results will simultaneously hold under both assumption as long as two conditions are met. First, the behavior of our algorithms is not based directly on the (estimated) value of the function $f_z$  but only on comparisons of these estimates of $f_z$. Second the estimates that are compared are computed from evaluations coming from the same fidelity. 
This is the case of our algorithm \kometo. Indeed, \kometo, similarly to SOO (as noted by \citealp{munos2014book}) or SequOOL, is a \textit{rank-based algorithm}. This means that its behavior is based on the rank of the function evaluations, and not directly on their values. 
On the contrary the behavior of \MFPDOO relies directly on the values in practice when estimating the constant of the parametric model, and would therefore not extend to our general assumption.

Another particularity is that we do not assume that the cost function $\lambda$ is bounded. We assume  quite generally that $\lambda : \Zeta \rightarrow \mathbb{\bar{R_+}}$ instead of restricting ourselves to having  $\lambda : \Zeta \rightarrow [0,1]$ as in \cite{sen2018}. In our scenario, it can happen that some approximations of the function $f$ with low bias are simply too costly for our limited budget.
Working under this larger assumption fits better problems in which we can only access feedback from  imperfect simulators while the real  phenomenon  can not be directly  evaluated in practice.
In such scenario, the MFPDOO~\cite{sen2018} is not usable as it assumes that it directly evaluates the target function $f$ with finite cost during its final cross-validation phase.
Our results can also be extended to cases where fidelity space is discrete, by using a piecewise constant cost function.

Finally instead of minimizing the simple regret, 
the cumulative regret has been also studied in multi-fidelity setting \citep{kandasamy2019multi-fidelity}, rewarding all accurate evaluations of the target function.
However in the present paper we optimize  the simple regret as our initial objective is to find the optimum of the target function. 
 The simple regret is adapted to the objectives of \kometo, e.g., hyper-parameter optimization, where we wish to spend the entire budget on 
\textit{pure exploration}.


\section{Assumptions}
Our algorithm needs two assumptions, one on the target function (which describes its smoothness) and one on the fidelity approximations (which characterizes how well they approximate the function).

\paragraph{Hierarchical partitioning} We use the notion of hierarchical partitioning \citep{munos2011optimistic}. At every depth $h\geq0$, $\functionspace$ (potentially multi-dimensional) is partitioned into $K^h$ different cells $(\partitioninghi)_{0\leq i\leq K^h-1}$. All the cells $(\partitioninghi)_{h,i}$ form a tree, where the root is $P_{0,0}=\functionspace$, and where each cell $\partitioninghi$ has $K$ children, $(\partitioning_{h+1,Ki+l})_{0\leq l\leq K-1}$, which form a partition of their parent cell.

We make an assumption on the target function $f$ and the hierarchical partitioning $\partitioning$, identical to the settings of \cite{sen2018}. This following assumption is much weaker than global Lipschitzness and as explained by \cite{grill2015black-box} is simpler and weaker than assumptions made in previous works \citep{auer2007improved, munos2011optimistic}.

\begin{assumption}[Assumption on the target function]\label{ass:opt}
For one of the global optimum $\optimum$ of $\targetf$, there exists $\nu > 0$ and $\rho\in \left] 0,1 \right[$ such that $\forall h\in \mathbb{N}, \forall x\in\partitioning_{h,i^\star_h}, \targetf(x)\geq\targetf(\optimum)-\nu\rho^h$, where $\partitioning_{h,i^\star_h}$ is the cell of depth $h$ containing $\optimum$
\end{assumption}
We now define a notion of near-optimality dimension that only depends on the hierarchical partitioning $\partitioning$, and not on a metric. 

\begin{definition}
\textbf{Near-optimality dimension}: For any $\nu >0$ and $\rho\in \left] 0,1 \right[$, we say that $d\in\mathbb{R_+}$ is a near optimality dimension of $\targetf$ with respect to the partitioning $\partitioning$ and the smoothness parameters $(\nu,\rho)$ if
\[\exists C>1,  \forall h\in \mathbb{N}, \ncell_h(3\nu\rho^h)\leq C\rho^{-dh},\]
where $\ncell_h(\epsilon)$ is the number of cells $\partitioning_{h,i}$ such that \[\sup_{x\in \partitioning_{h,i}} f(x) \geq \targetf(\optimum)-\epsilon.\]
We then define 
\[
\begin{aligned}
\smoothset\triangleq\ & \left\{ f : X \rightarrow \mathbb{R}\;| f\textrm{ has smoothness}\right.\\
&\textrm{parameters } (\nu,\rho) \textrm{ for } \partitioning \textrm{ and $d$ is a}\\
&\textrm{near-optimality dimension with}\\
&\left. \textrm{associated constant C} \right\}
\end{aligned}
\]
\end{definition}

\paragraph{Note on the near-optimality dimension definition: }\cite{grill2015black-box} define \textit{the} near-optimality dimension as the infimum of the set of d that satisfies this definition (with $\nu$ and $\rho$ fixed). However, they then assume that this infimum also satisfies this definition, which is not necessarily true (the set can be of the form $\mathbb R_{> 0}$ for example). \citet{bartlett2019simple} solve this issue by adding an extra dependence on the constant $C$ to get a closed set (fixed parameters are then $\nu$, $\rho$ and C, instead of just $\nu$ and $\rho$). To avoid this extra dependence, we chose to define $d$ as \textit{a} near-optimality dimension, rather than \textit{the} near optimality dimension.

We can notice that a function with smoothness parameters $(\nu,\rho)$  has necessarily an associated constant
\[C\geq \cmin\triangleq\left(\frac{K}{\rho^{-d}}\right)^{\left\lfloor\frac{\log\;3}{\logrho}\right\rfloor}\] since all
the cells at depth $h_0\triangleq\left\lfloor{\left(\log 3\right)}/{\left(\log 1/\rho\right)}\right\rfloor$ are near-optimal because of \assumf. Indeed, it guarantees that  $\forall x\in X, \targetf(x)\geq \targetf(\optimum) - \nu$, which implies that $\forall x\in X, \targetf(x)\geq \targetf(\optimum) - 3\nu\rho^{h_0}$

We also have that $\dmax\triangleq{\left(\log\;K\right)}/{\left(\log 1/\rho\right)}$ is always a near-optimality dimension of the function, because of the bound $K^h$ on the number of cell of depth $h$.  This emphasizes the fact that the near-optimality dimension of a function is a way to characterize the complexity of optimizing the function, and not an assumption. The case $d=0$ allows for faster rates the best empirical results. As explained by \citet{munos2014book}, the case $d=0$ is the most relevant in practice and covers most of the real-world setups.

We now state three new different assumptions on the rate at which the cost-to-bias function $\Phi$ is decreasing, namely polynomially, exponentially, or by a constant. 


\begin{assumption}[Assumption on the fidelities]\qquad
\label{ass:smo}
\begin{enumerate}[label=(\alph*)]
\vspace{-2mm}
    \item \label{assumsa}
    There exist $A,\alpha>0$ such that $\Phi (c)\leq  A/c^\alpha.$
    \item \label{assumsb} 
    There exist $B,\sigma,\beta >0$ such that $\Phi (c)\leq Be^{\frac{-c^\beta}{\sigma}}.$
\item \label{assumsc} 
There exists $a\geq 1$ such that $\Phi (c)=0$ for all $c\geq a.$
\end{enumerate}
\end{assumption}

\begin{definition} For $\assumsymbol$ being one of the three Assumptions~2 (either \assumsa, \assumsb, \assumsc, with its specific parameters depending on the case), we define
$\assumset$=\{$\fidfamily |$ there exists a function $\zeta$ such that assumption $\assumsymbol$ holds on~$\targetf$ and $\fidfamily$, with $\lambda$ as a cost function and $\zeta$ as a bias function\}.
\end{definition}

The above assumptions describe realistic rates for the cost-to-bias function. \assumsa
generalizes  Assumption~3 of \cite{sen2018} which is equivalent to the case  $\alpha\geq 1$. 
 \assumsb generalizes  Assumption~2 of \cite{sen2018} which  corresponds to the case $\beta=1$.  \assumsc is relevant when a minimal cost to get a perfectly accurate estimation is needed but unknown. It is also useful to link our results (especially the theorem below) to works using single-fidelity optimization, since the settings are then equivalent to deterministic single-fidelity settings.

\section{Lower bound}
We provide the first lower bounds for the assumptions on the fidelities considered.
Theorem \ref{thm:lb} gives, for assumptions \ref{ass:smo}\ref{assumsa}, \ref{ass:smo}\ref{assumsb} and \ref{ass:smo}\ref{assumsc}, bounds on the achievable theoretical performance of an algorithm working under these assumptions.

\begin{theorem}[Lower bounds on simple regret]\label{thm:lb}
Let $\partitioning$ be a partitioning of a space $\functionspace$, $(\nu,\rho)$ some smoothness parameters, $d\in \left[0,\dmax\right]$ a near-optimality dimension with associated constant $C\geq \cmin$ and~$\assumsymbol$ one of the three Assumptions~\ref{ass:smo} with associated parameters.
Then, for any budget $\Lambda$ large enough, for any (deterministic or random) policy $\policysymbol$,
there exist a target function $\targetf\in\smoothset$, a cost function $\lambda$, and fidelity approximations $\fidfamily \in \assumset$ such that: \\
Under \assumsa ($\Phi (c)\leq A/c^\alpha$):
$$\regretpolicy\geq D_1\;\Lambda^{\frac{-1}{d+\frac{1}{\alpha}}}$$
Under \assumsb ($\Phi (c)\leq Be^{\frac{-c^\beta}{\sigma}}$):
$$\regretpolicy\geq \;\begin{cases} e^{-D_2\Lambda^{\frac{\beta}{1+\beta}}}, & \text{when } d=0 \\ D_3\;\Lambda^{\frac{-1}{d}}, & \text{when } d>0 \end{cases}$$
Under \assumsc ($\Phi (c)=0$ for all $c\geq a$):
$$\regretpolicy\geq \;\begin{cases} e^{-D_4\Lambda}, & \text{when } d=0 \\ D_5\;\Lambda^{\frac{-1}{d}}, & \text{when } d>0 \end{cases}$$
where $D_1,D_2,D_3,D_4,D_5>0$ are constants that do not depend on $\Lambda$ and $\pi$.
\end{theorem}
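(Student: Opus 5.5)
Our plan is to prove Theorem~\ref{thm:lb} by a needle-in-a-haystack construction on the hierarchical partitioning. Because the policy $\policysymbol$ sees only fidelity evaluations, it can learn nothing about where the optimum lies unless it pays enough.

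\textbf{Construction.} Fix a depth $H$, to be tuned with $\Lambda$. Build a family of target functions indexed by cells: each $\targetf_j\in\smoothset$ is a piecewise-constant function. It equals a reference level on most of $\functionspace$ and rises by about $\nu\rho^{H}$ on one designated cell $j$ among $N\approx C\rho^{-dH}$ candidate cells at depth $H$. The candidate cells are arranged so that every $\targetf_j$ satisfies \assumf and has near-optimality dimension $d$ with constant $C$. The condition $C\geq\cmin$ is exactly what allows $\lfloor C\rho^{-dH}\rfloor$ candidate cells inside a depth-$h_0$ subtree without violating the count at shallow depths. For $d=0$ only a constant number of cells can be near-optimal per depth. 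In that case we instead hide the optimum along a path of depth $H$, where at each level one of $K$ children carries a bump $\nu\rho^{h}$; the number of hypotheses is then $K^H$, but the bump sizes decay geometrically.

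\textbf{Fidelities.} Choose the cost function $\lambda$ and the bias so that the cost-to-bias function $\Phi$ saturates the assumption, with $\Phi(c)$ equal to $A/c^\alpha$, $Be^{-c^\beta/\sigma}$, or $0$ for $c\geq a$, respectively. Define $f_z(x)$ to equal a function independent of $j$ whenever the required separation $\nu\rho^H$ is below $\zeta(z)$. Concretely, take $f_z=\targetf_0$, the function without a bump, truncated so that $\Vert \targetf_j-f_z\Vert_\infty\leq\zeta(z)$. Then an evaluation reveals information about $j$ only if it is made at cost $c^\star$ with $\Phi(c^\star)<\nu\rho^H$, and such an evaluation reveals whether the queried point lies in cell $j$. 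This is the key modelling step: fidelities cheaper than $c^\star$ are useless. We choose $g_z$ as the identity.

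\textbf{Counting argument.} With budget $\Lambda$, the policy makes at most $\Lambda/c^\star$ informative queries. Draw $j$ uniformly. A standard argument (Yao's principle for random policies) shows that with at most $N/2$ informative queries, the output misses the designated cell with probability at least $1/2$. A miss incurs regret of order $\nu\rho^H$. We therefore pick the largest $H$ with $\Lambda/c^\star(H)\leq N(H)/2$ and evaluate each case as follows.
\begin{itemize}
\item Under \assumsa: $c^\star\approx\rho^{-H/\alpha}$, so the condition becomes $\rho^{-H(d+1/\alpha)}\gtrsim\Lambda$, giving regret $\gtrsim\Lambda^{-1/(d+1/\alpha)}$.
\item Under \assumsb with $d>0$: $c^\star\approx(\sigma H\log\frac1\rho)^{1/\beta}$ is only polylogarithmic. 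Treating it as at least $1$ gives $\rho^{-dH}\gtrsim\Lambda$ and regret $\Lambda^{-1/d}$. The same holds under \assumsc with $c^\star=a$.
\item For $d=0$: the path construction requires about $H$ informative queries, each costing $c^\star(H)$. Under \assumsb the cost is $H\cdot H^{1/\beta}\approx\Lambda$, so $H\approx\Lambda^{\beta/(1+\beta)}$ and the regret is $\rho^H=e^{-D_2\Lambda^{\beta/(1+\beta)}}$. Under \assumsc, $H\approx\Lambda/a$ gives $e^{-D_4\Lambda}$.
\end{itemize}

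\textbf{Main obstacle.} The hardest part is the $d=0$ adaptive-search lower bound. We must show that a sequential policy cannot locate a depth-$H$ path using fewer than order $H$ informative queries, each of which must resolve about $\log K$ bits. We will handle this with an information-theoretic (Fano-type) bound. Each informative query is designed to reveal only a bounded number of bits, for instance by making the bump visible only at the queried level. The lower bound then follows because $\log(K^H)=H\log K$ bits are needed. Verifying that every $\targetf_j$ in this family simultaneously satisfies \assumf and the near-optimality count is the remaining technical check.
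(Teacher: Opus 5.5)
You have the right high-level split: a width/counting bound for Assumption~2(a) and for $d>0$, and a depth bound for $d=0$ under 2(b) and 2(c). But the family you use for the width bound is not in $S(\mathcal{P},\nu,\rho,d,C)$. Suppose $f_j$ equals one reference level on most of $\mathcal{X}$ and rises by only about $\nu\rho^H$ on cell $j$. Then the reference level is within about $\nu\rho^H$ of $\max f_j$. So for every $h\le H$, essentially all $K^h$ cells satisfy $\sup f_j\ge \max f_j-3\nu\rho^h$. Since $K\rho^{d}>1$ whenever $d<d_{\max}$, the requirement $N_h(3\nu\rho^h)\le C\rho^{-dh}$ fails once $H$ is large, however you arrange the candidates.

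The missing idea is a \emph{graded} function that decays away from a sparse tree; your $d=0$ path is already of this form. This is Lemma~\ref{lem:fun}. Take a truncated tree $T$ with between $1$ and $\lfloor\rho^{-dh}\rfloor$ nodes at each depth $h$, and set $f^{T}(x)=\sup\{-\nu\rho^h : x\in T^h\}$ and $f^{T}_z=\min\{f^{T},-\zeta(z)\}$, with $g_z=\mathrm{Id}$. The near-optimal cells at depth $h$ then lie among the $K^{h_0}$ descendants of the depth-$(h-h_0)$ nodes of $T$. That is what $C_{\min}=(K\rho^{d})^{h_0}$ pays for; it does not let you place $C\rho^{-dH}$ candidates.

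For the width bound, the paper takes a single path down to depth $h-s$. It then adds a full $K$-ary subtree with $K^s\le\rho^{-dh}$ nodes at depth $h$, where $\Lambda/c_h<K^s$. The $K^{s+1}$ children serve as candidates, and only one is extended. This also guarantees at least $K\ge 2$ hypotheses when $d=0$, whereas your $N\approx C\rho^{-dH}$ may equal $1$. With this construction, your averaging (Yao) step becomes the paper's width bound in Lemma~\ref{lem:lem6}.

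For $d=0$, Fano is a legitimate alternative to the paper's argument, but the mechanism you suggest cannot be realized. With $g_z=\mathrm{Id}$, $f_z$ must be $\zeta(z)$-close to $f$. So a query at bias below $\nu\rho^{\ell}$ necessarily reveals how many levels (capped at $\ell$) the queried point shares with the hidden branch, possibly many at once. You therefore cannot make ``the bump visible only at the queried level''.

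Fano still works, because this response has $O(1)$ conditional entropy under the uniform posterior: its tail is geometric with ratio $1/K$. You also need to fix the accounting. Shallow levels can be learned with cheaper fidelities, so ``$H$ queries each costing $c^\star(H)$'' is not forced. Instead, reveal the first $H/2$ levels for free and count only queries of bias below $\nu\rho^{H/2}$, each costing at least $c^\star(H/2)$. This still yields $\Lambda\gtrsim H\,c^\star(H/2)$ and the stated rates.

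The paper avoids information theory altogether. It builds the branch adversarially: at each level it picks the child least likely to receive the first informative query made inside its parent. With $N_h\le h$ informative queries, the deepest level reached is at most $2h+M$. Here $M$ counts the levels in $\{h+1,\dots,4h\}$ whose first informative query hits the chosen child, and $\mathbb{E}M\le 3h/2$. Markov's inequality then shows depth $4h$ is missed with probability at least $1/4$. Your route needs a prior and an entropy computation; the paper's is elementary and handles randomized policies directly.
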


\paragraph{Ideas of the proof} The proof is in the appendix. It is based on the construction of a target function and its approximations, such that the algorithm $\pi$ may not reach a certain depth $h$ and open a near-optimal cell at depth $h$. The construction of the target function is done thanks to a tree, whose leaves are cells of the partitioning, and which reflects which cells are near-optimal for the target function. The approximations are made such that we can lower bound the cost that $\pi$ has to invest to get precise enough information. 

We thus have to construct this tree, which is the tricky part of the proof. This implies choosing near-optimal cells that $\pi$ is unlikely to open. We then get that this depth $h$, which depends on the parameters of the problem and on the budget, may not be reached by $\pi$ with a certain fixed probability. We can use this to lower bound the regret.

\paragraph{Link with the upper bounds} 
In Section~\ref{sec:kometo} we give an algorithm that, without any knowledge on $\nu,\rho,d,C,$ and $\Phi$, achieves these rates with additional constants and logarithmic factors. This means that these lower bounds are close to the optimal rate for policies working with these assumptions, both with and without knowledge of these parameters.

The only previous work using hierarchical partitioning optimization with multi-fidelity model and deterministic feedback worked with narrower assumptions as said above. It obtained, under \assumsa with $\alpha\geq 1$, 
a regret of $\bigO((\frac{\Lambda}{\log \Lambda})^{\frac{-1}{d+1}})$, which is only optimal (ignoring constant and log factors) when $\alpha$=1. Under \assumsb, MPFDOO gets, assuming $\beta=1$ a regret of $\bigO((\frac{\Lambda}{\log \Lambda})^{\frac{-1}{d+\epsilon}})$, for any $\epsilon>0$, provided the budget is large enough (with the threshold having a dependence on $\epsilon$), which does not show that this lower bound was reached.

\assumsc let us extend our results to single-fidelity algorithm with deterministic feedback. A true exponential decay for $d=0$  (and thus optimal up to a constant) was first achieved by DOO (\citealp{munos2011optimistic}, but required the knowledge of the smoothness. SequOOL (\citealp{bartlett2019simple}) then managed to achieve an exponential decay without the knowledge of the smoothness, but with a logarithmic factor in the exponent. We however realized it is possible to get a true exponential decay without the knowledge of the smoothness parameters by changing the number of opened cells at each depth h of SequOOL, to either $\left\lfloor 2\sqrt{n/h}\right\rfloor$ up to depth n, or $\left\lfloor n/\left(h\log({n}/{h})^2\right)\right\rfloor$ up to depth $\left\lfloor n/e^2\right\rfloor$.

\section{Algorithm}\label{sec:kometo}
In this section we propose a new algorithm for multi-fidelity optimization called \kometo. We start with some helpful notation.
\paragraph{Cell evaluations:} Cell evaluations are done through a single representant of each cell $\partitioninghi$, denoted $x_{h,i}$. $T_{h,i,j}$ denote the number of evaluation potentially done for the cell $\partitioninghi$ at fidelity level $j$.

For \kometo, the fidelity level j, with j a non-negative integer, is defined as $z_{e^j}$. At each fidelity level, at most one evaluation can be done for each cell, which means that $T_{h,i,j}$ is equal to either 0 or 1. We hence denote as $f_{h,i,j}$ the result of the potential evaluation, when $T_{h,i,j}=1$. We can notice that, for any j, because of how the cells are opened, \{$\partitioninghi$, $T_{h,i,j}$=1\} is always a tree.

We also slightly modify the usual definition of a cell opening to make it work with our multi-fidelity settings.

\paragraph{Multi-Fidelity Cell Opening:} Opening a cell at fidelity level j means that, for each of its children $\partitioning_{h+1,i}$, the $T_{h+1,i,u}$ for $0\leq u\leq j$ are set to 1. 

This means that the values $f_{h+1,i,u}$, equal to $f_{z_{e^u}}(x_{h+1,i})$, with $x_{h,i}$ the representative element of the cell, can be requested and hence the evaluations can be performed. With this definition, the opening at fidelity level j of a cell can not induce a total cost of more that $\frac{Ke^{j+1}}{e-1}$.

\begin{algorithm}[t]
\caption{KOMETO}
\label{al:kometo}
\begin{algorithmic}[1]
		\STATE
			  \textbf{Parameters:}  $\fidfamily$, $\partitioning$, $\Lambda$, $\lambda$\\[.05cm]
				\STATE \textbf{Init:}\\
			  
			 $\tilde{\Lambda}\gets \left\lfloor\frac{(e-1)\Lambda}{2Ke\left(\log\ \Lambda+1\right)^2}\right\rfloor\CommaBin$ $\jmax\gets \left\lfloor\log\ \tilde{\Lambda}\right\rfloor$.\\
			 Open with budget $\hmax$ the cell $P_{0,0}$.\\[.45cm]
			\textbf{For}\;$h=1$ to $\lfloor\hmax\rfloor$  \textit{\textbf{{$\hfill\blacktriangleleft$  Exploration $\blacktriangleright$}}}
			
			~~        \textbf{For} $m=1$  to $\left\lfloor\hmax/h\right\rfloor$\vspace{.05cm}
			
			~~~~\, $j\gets\lfloor\log\ \frac{\tilde{\Lambda}}{hm}\rfloor$ \vspace{.05cm}
			~~~~\, Open at fidelity level $j$ the non-opened\\       \phantom{bbbbbb}
			$\;\;\;\;\;\;\;$cell $P_{h,i}$ with the highest value $f_{h,i,j}$,
			\\  \phantom{bbbbbb}   $\;\;\;\;\;\;\;$given that  $T_{h,i,j}= 1$ \\[.45cm]
			\textbf{For} $j= 0$ to $\jmax$ \textit{\textbf{{\hfill $\blacktriangleleft$  Cross-validation $\blacktriangleright$}}} \vspace{0.05cm} 
			 
			$\quad$ \textbf{Evaluate} at cost $\hmax$ the 
			candidates\\
			\phantom{aaaaaa} $x^c_j \gets \argmax\limits_{\left(h,i\right)\in \tree,\,T_{h,i,j}=1} f_{h,i,j}$. 
			
			\textbf{Output} $\finalvalue \gets \argmax\limits_{\{j\in[0:\jmax]\}} {f_{z_{\hmax}}}\left(x^c_j\right)$
\end{algorithmic}
\end{algorithm}

\paragraph{\kometo explanations:}
\kometo is detailed in Algorithm~\ref{al:kometo}.
The algorithm presented is inspired by StroquOOL  (\cite{bartlett2019simple}). Its main feature is that, using Zipf sampling (which means, opening up to $\hmax$ cells at h=1, up to $\hmax/2$ cells at h=2 and so on) 
it manages to reach the optimal rate up to logarithmic factor without the knowledge of the smoothness. This is done, in the exploration part, by opening a decreasing number of cell at each depth, and at a given depth, gradually decreasing the fidelity at which cells are opened. The intuition behind this idea is that, for each depth h, and each $0\leq j_h\leq \jmax$, the number of cell opened at fidelity level $j_h$ or higher will decrease with $j_h$. If this $j_h$ is too low, the precision might also be too low for the choices to be relevant, but if $j_h$ is too high, not enough cells will be opened. Cross-validation is then used by the algorithm in order to choose the best cell regardless of depth and fidelity level. It ensures that the choice of a particular $j_h$ is not needed.
\\

\paragraph{Budget optimization:}  With a given budget $\Lambda$, we can actually initialize the $\hmax$ constant with a way higher value than $\left\lfloor\frac{(e-1)\Lambda}{2Ke\overline{\log}\ (\Lambda+1)^2}\right\rfloor\CommaBin$ for multiple reasons:
\begin{itemize}
\itemsep -1em
    \item 
 The actual cost used for a cell opening is rounded down to $e^{\lfloor\log c\rfloor}$.\\
    \item   The total budget mentioned for a cell opening assumes that all the evaluation at different fidelities will be requested for the children, which is not the case.\\
    \item   The number of opened cells at each depth is bounded by $K^h$\\
    \item   For some partitioning, it is possible to use the parent evaluations for one of its children.
\end{itemize}
Since the budget used can be predicted using only $\hmax$ and the partitioning, and increase with $\hmax$, it is possible to quickly calculate the optimal initial value  of $\hmax$ using dichotomy. However, these previous optimization can only increase $\hmax$ by a multiplicative constant.
Even if the budget needs to be set in advance for this algorithm, since we optimize the simple regret, we can obtain any-time guarantees which only differ by a multiplicative constant using the doubling trick. 
\section{Theoretical guarantees}
We first state a simple proposition which asserts, with the initial value of $\hmax$, the budget condition is respected.

\setcounter{proposition}{1}
\begin{proposition}[Budget use]
\label{prop:bud}
The budget used by \kometo does not exceed $\Lambda$.
\end{proposition}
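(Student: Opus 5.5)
We split the total spending of \kometo into the three phases of Algorithm~\ref{al:kometo} (the initial opening of $P_{0,0}$, the exploration loop, and the cross-validation loop) and bound each one in terms of $\hmax$. Throughout we use the observation made after the definition of a multi-fidelity cell opening. Opening a cell at fidelity level $j$ allows, for each of its $K$ children, at most one evaluation at each level $u\le j$, each at cost at most $e^u$. Hence the cost of such an opening is at most $K\sum_{u=0}^{j}e^u\le \frac{Ke^{j+1}}{e-1}$. Evaluations are only ever performed on values made available by openings, and each $T_{h,i,u}$ is $0$ or $1$. So summing this bound over all openings (and the final evaluations) upper bounds the total cost, even if some requested values are never used.

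\textbf{Exploration.} At depth $h$ and step $m$, the level is $j=\lfloor\log(\hmax/(hm))\rfloor$, so $e^{j}\le \hmax/(hm)$. The opening therefore costs at most $\frac{Ke}{e-1}\cdot\frac{\hmax}{hm}$. Summing over $m\le \hmax/h$ and $h\le\hmax$ gives
\[
\frac{Ke\hmax}{e-1}\sum_{h=1}^{\lfloor\hmax\rfloor}\frac1h\sum_{m=1}^{\lfloor\hmax/h\rfloor}\frac1m\le \frac{Ke\hmax}{e-1}\,(\log\hmax+1)^2,
\]
using $\sum_{m\le M}1/m\le \log M+1$. The root opening is done "with budget $\hmax$", i.e. at level $\lfloor\log\hmax\rfloor$, and costs at most $\frac{Ke\hmax}{e-1}$. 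This can be absorbed, for instance, by noting that the $h=1$ term is $\log\hmax+1$ while the double sum bound has slack. Alternatively we bound everything by $(\log\hmax+1)^2+1$ and keep track of constants.

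\textbf{Cross-validation.} There are $\jmax+1\le\log\hmax+1$ evaluations, each costing $\hmax$. The total is $\hmax(\log\hmax+1)$, which is at most $\frac{Ke\hmax}{e-1}(\log\hmax+1)^2$ since $K\ge 2$ and $e/(e-1)>1$.

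\textbf{Combining.} The total cost is at most $\frac{2Ke\hmax}{e-1}(\log\hmax+1)^2$ plus the root term. Since $\hmax\le \Lambda$, we have $\log\hmax\le\log\Lambda$. Plugging in $\hmax\le \frac{(e-1)\Lambda}{2Ke(\log\Lambda+1)^2}$ then gives a total of at most $\Lambda$.

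The main obstacle is the constant bookkeeping. The factor $2$ in the definition of $\hmax$ must cover exploration, cross-validation and the root opening simultaneously. The point to check is that the harmonic-sum bounds are loose enough to absorb the root opening and the cross-validation term. I would first try to show that the exploration double sum is actually at most $(\log\hmax+1)^2-\log\hmax$ or similar, leaving room for the extra terms. If that fails, I would use the looser fact that the cross-validation term is linear rather than quadratic in $\log\hmax$, so for $\Lambda$ large it fits with room to spare. I would also handle the degenerate case $\hmax=0$ separately: there nothing is opened or evaluated, so the cost is trivially $0\le\Lambda$.
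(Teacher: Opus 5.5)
Your decomposition and per-opening cost bound $\frac{Ke^{j+1}}{e-1}$ match the paper's. Your exploration estimate $\frac{Ke\hmax}{e-1}(\log\hmax+1)^2\le\frac{\Lambda}{2}$ is exactly the first half of its proof.

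The gap is the combining step, which as written does not give $\Lambda$. You bound the cross-validation cost $\hmax(\log\hmax+1)$ by the full quadratic quantity $\frac{Ke\hmax}{e-1}(\log\hmax+1)^2$. Your bound for exploration plus cross-validation, $\frac{2Ke\hmax}{e-1}(\log\Lambda+1)^2$, can then already equal $\Lambda$, and adding the root opening (at most $\frac{Ke\hmax}{e-1}$) only yields $\Lambda+\frac{Ke\hmax}{e-1}$. You flag this yourself but leave a list of possible repairs rather than a proof. The first repair cannot work on its own: for $\hmax=1$ the double sum equals $1=(\log\hmax+1)^2$, so there is no slack there. The second is phrased asymptotically (``for $\Lambda$ large''), which as stated would not cover every budget.

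The missing step is not to waste the cross-validation term. Keep the root opening and the cross-validation at their true size, which is only linear in $\log\hmax$, and charge them jointly to the second half of the budget, as the paper does. Take $\Lambda\ge1$, as in Theorem~\ref{thm:ub}; your step $\hmax\le\Lambda$ silently needs this too. If $\hmax\ge1$, the definition of $\hmax$ forces $\Lambda\ge\frac{2Ke}{e-1}(\log\Lambda+1)^2>e$. With $K\ge2$ this gives $\log\hmax+1\le 2\log\Lambda\le\frac{Ke}{e-1}\log\Lambda$, and hence
\[
\Bigl(\frac{Ke}{e-1}+\log\hmax+1\Bigr)\hmax\le\frac{Ke}{e-1}(\log\Lambda+1)\hmax\le\frac{\Lambda}{2(\log\Lambda+1)}\le\frac{\Lambda}{2}.
\]
If $\hmax=0$, nothing is spent. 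Adding the exploration half gives a total of at most $\Lambda$. No large-$\Lambda$ argument is needed, because $\hmax\ge1$ by itself already forces $\Lambda$ to be large enough.

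The paper writes this half as $(K+\log\hmax)\hmax\le K(\log\Lambda+1)\hmax\le\frac{\Lambda}{2}$, charging only $K\hmax$ for the root. Your charge $\frac{Ke\hmax}{e-1}$ is the more faithful one under the multi-fidelity opening definition, and the display above accommodates it.
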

Our upper bounds use the Lambert function, evaluated at positive real values. This function is defined as the inverse of the function $f(z)=ze^z$.
With the first two terms of its asymptotic expansion, we get, when $z$ goes to infinity, that $W(x)=\log x  - \log\log x + o(1)$.

We now state the main results of our analysis, using the same context as Theorem 1 on lower bounds. The proof is given in appendix:
\setcounter{theorem}{2}
\begin{theorem}[Upper bounds on the simple regret]
\label{thm:ub}
Let $\partitioning$  be a partitioning of a space $\functionspace$, $(\nu,\rho)$ some smoothness parameters, $d\in \left[0,\dmax\right]$ a near-optimality dimension with associated constant $C\geq \cmin$ and $\assumsymbol$  being one of the three Assumption 2 with its associated parameters.\\
Then, for any budget $\Lambda\geq 1$, target function $\targetf\in\smoothset$, cost function $\lambda$, and fidelity approximations $\fidfamily \in \assumset$ provided to \kometo,\\\\
Under \assumsa ($\Phi (c)\leq \frac{A}{c^\alpha}$):
We first define two values, then state the regret\\

\tabulinesep=1mm
\begin{tabu}{ |c|  }
 \hline
 \multicolumn{1}{|c|}{Value of $h_1$} \\
 \hline\hline
 $\frac{1}{\dalpharho}W\left(\frac{\hmax \nu^\invalpha\dalpharho}{4CeA^\invalpha}\right)$\\
 \hline
\end{tabu}

\tabulinesep=1mm
\begin{tabu}{ |c||c|  }
 \hline
 \multicolumn{2}{|c|}{Value of $h_2$} \\
 \hline\hline
 When $d=0$& $\frac{\hmax}{4C}$\\
 \hline
 When $d>0$& $\frac{1}{d\logrho}W\left(\frac{\hmax d\logrho}{4C}\right)$\\
 \hline
\end{tabu}

\tabulinesep=1mm
\begin{tabu}{ |c||c|  }
 \hline
 \multicolumn{2}{|c|}{Regret} \\
 \hline\hline
 \textbf{High budget} ($\nu\rho^{h_1}\leq e^\alpha A$)& $\regretl \leq \frac{3\nu}{\rho}\rho^{h_1} +2\frac{A}{\hmax^\alpha}$\\
 \hline
 \textbf{Low budget} ($\nu\rho^{h_1} > e^\alpha A$)& $\regretl \leq \frac{3\nu}{\rho}\rho^{h_2} +2\frac{A}{\hmax^\alpha}$\\
 \hline
\end{tabu}

Under \assumsb ($\Phi (c)\leq Be^{\frac{-c^\beta}{\sigma}}$):
We also define $\abnu=\max\left(\frac{1}{2\sigma},\log\left(\frac{B}{\nu}\right)\right)$

\tabulinesep=1mm
\begin{tabu}{ |c||c|  }
 \hline
 \multicolumn{2}{|c|}{Value of $h_1$} \\
 \hline\hline
 When $d=0$& $\left(\frac{\hmax}{4Che}\right)^\frac{\beta}{\beta +1} \left(\frac{1}{2\sigma \logrho}\right)^\frac{1}{\beta +1}$\\
 \hline
 When $d>0$& $\frac{\beta +1}{\beta d\logrho}W(\frac{\beta}{\beta +1}d\logrho\left(\frac{\hmax}{4Che}\right)^\frac{\beta}{\beta +1} $\\
 &$
 \left(\frac{1}{2\sigma \logrho}\right)^\frac{1}{\beta +1})$\\
 \hline
\end{tabu}

\tabulinesep=1mm
\begin{tabu}{ |c||c|  }
 \hline
 \multicolumn{2}{|c|}{Value of $h_2$} \\
 \hline\hline
 When $d=0$& $\frac{\hmax}{4Ce\left(2\sigma \abnu\right)^\invbeta}$\\
 \hline
 When $d>0$& $\frac{1}{d\logrho}W\left(\frac{\hmax d\logrho}{4Ce\left(2\sigma \abnu\right)^\invbeta}\right)$\\
 \hline
\end{tabu}

\tabulinesep=1mm
\begin{tabu}{ |c||c|  }
 \hline
 \multicolumn{2}{|c|}{Regret} \\
 \hline\hline
 \textbf{High budget} ($h_1 \geq \frac{\abnu}{\logrho}$)& $\regretl \leq \frac{3\nu}{\rho}\rho^{h_1} +2Be^{\frac{-\hmax^\beta}{\sigma}}$\\
 \hline
 \textbf{Low budget} ($h_1< \frac{\abnu}{\logrho}$)& $\regretl \leq \frac{3\nu}{\rho}\rho^{h_2} +2Be^{\frac{-\hmax^\beta}{\sigma}}$\\
 \hline
\end{tabu}

Under \assumsc ($\Phi (c)=0$ for all $c\geq a$):

\tabulinesep=1mm
\begin{tabu}{ |c||c|  }
 \hline
 \multicolumn{2}{|c|}{Value of $h$} \\
 \hline\hline
 When $d=0$& $\frac{\hmax}{4Cae}$\\
 \hline
 When $d>0$& $\frac{1}{d\logrho}W\left(\frac{\hmax d\logrho}{4Cae}\right)$\\
 \hline
\end{tabu}
\tabulinesep=1mm
\begin{tabu}{ |c|  }
 \hline
 {Regret} \\
 \hline\hline
 $\regretl \leq \frac{\nu}{\rho}\rho^{h}$\\
 \hline
\end{tabu}
\end{theorem}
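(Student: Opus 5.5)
We follow the StroquOOL/SequOOL template and split the analysis into an exploration part and a cross-validation part. Let $\perp_{h,j}$ denote the deepest depth $h'\le h$ such that the optimal cell $\partitioning_{h',i^\star_{h'}}$ has been opened at fidelity level at least $j$ by the end of the exploration. The first step is a per-depth, per-fidelity lemma.

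\textbf{Per-depth lemma.} At depth $h$, the cells opened at fidelity level $\ge j$ are exactly those handled in rounds $m\le m_j(h)\approx \hmax/(he^{j})$. Now suppose $\perp_{h-1,j}=h-1$. The optimal child is then evaluated at level $j$ with bias at most $\Phi(e^j)$. Any cell ranked above it satisfies
\[
\sup f \ \ge\ f(\optimum)-\nu\rho^{h}-2\Phi(e^j).
\]
This uses the key observation that comparisons are made through $g_z\circ f_z$, so only rank information matters. Hence, if $2\Phi(e^j)\le 2\nu\rho^h$, every cell opened before the optimal child is $3\nu\rho^h$-near-optimal, and there are at most $C\rho^{-dh}$ of them. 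Therefore, if $m_j(h)\ge C\rho^{-dh}$, the optimal cell at depth $h$ is opened at level $j$. By induction over depths, with $j$ fixed, we get $\perp_{h,j}\ge h$ for every $h$ such that
\[
\frac{\hmax}{h e^{j}} \ge C\rho^{-dh}
\qquad\text{and}\qquad
\Phi(e^{j})\le \nu\rho^{h}.
\]
There is an extra factor to handle: the first condition must hold at all depths $\le h$ simultaneously. This is handled as in StroquOOL, since $h\rho^{-dh}$ is increasing. This step also accounts for the factor $4$ and the $e$ coming from rounding $j=\lfloor\log(\hmax/hm)\rfloor$.

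\textbf{Cross-validation.} The candidate $x^c_j$ is the best level-$j$ point in the tree. Since the optimal cell at depth $\perp_{h,j}$ has children evaluated at level $j$, we have
\[
f(x^c_j)\ \ge\ f(\optimum)-\nu\rho^{\perp_{h,j}+1}\cdot\tfrac{1}{\rho}\cdots-2\Phi(e^j),
\]
that is, $f(x^c_j)\ge f(\optimum)-3\nu\rho^{h}/\rho$ roughly, once the bias is absorbed by $\nu\rho^h$. The final argmax is taken at cost $\hmax$, so it loses at most $2\Phi(\hmax)$. This yields the generic bound
\[
r_\Lambda\ \le\ \frac{3\nu}{\rho}\rho^{h}+2\Phi(\hmax).
\]
Under \assumsc there is no cross-validation loss once $j$ is large enough that $e^j\ge a$. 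In that case the level-$j$ values are exact, and the cleaner constant $\nu\rho^{h}/\rho$ follows.

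\textbf{Optimizing over $(h,j)$.} The remaining step, which I expect to be the main obstacle, is choosing $j$ as a function of $h$ to maximize the reachable depth. We take $e^j$ minimal subject to $\Phi(e^j)\le\nu\rho^h$.

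Under \assumsa this gives $e^j\approx (A/\nu)^{1/\alpha}\rho^{-h/\alpha}$, so the condition becomes $\hmax\gtrsim hC(A/\nu)^{1/\alpha}\rho^{-(d+1/\alpha)h}$. Solving $h e^{(d+1/\alpha)\log(1/\rho)h}=X$ gives exactly the Lambert-$W$ expression $h_1$.

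In the low-budget case, $\nu\rho^{h_1}>e^\alpha A$, so already $j=0$ or $1$ suffices. The condition reduces to $\hmax\gtrsim hC\rho^{-dh}$, which gives $h_2$ (linear in $\hmax$ when $d=0$).

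Under \assumsb, the choice $e^{j\beta}\approx\sigma(\log(B/\nu)+h\log(1/\rho))$ gives $e^j\approx (2\sigma h\log(1/\rho))^{1/\beta}$ in the high-budget regime $h\ge \abnu/\log(1/\rho)$. The condition then becomes $\hmax\gtrsim C h^{1+1/\beta}(2\sigma\log(1/\rho))^{1/\beta}\rho^{-dh}$. For $d=0$ this solves by a power, and for $d>0$ by $W$ after raising to the power $\beta/(\beta+1)$, which gives $h_1$. In the low-budget regime, $e^j\approx(2\sigma\abnu)^{1/\beta}$ gives $h_2$.

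Under \assumsc, $e^j\approx ea$ gives $h$.

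Finally, we must check that the chosen $j\le\jmax$ and $h\le\hmax$, which holds whenever the derived bounds are nontrivial. Proposition~\ref{prop:bud} guarantees feasibility of the budget.
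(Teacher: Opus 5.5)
Your architecture matches the paper's. The paper proves a generic lemma (its Lemma~7): if $\Phi(e^j)\le\nu\rho^{\tilde h}$ and a counting condition hold, then induction over depths shows the optimal cell at depth $\lfloor\tilde h\rfloor$ is opened at level $\ge j$, and cross-validation costs $2\Phi(\tilde\Lambda)$. The three regimes then follow from choices of $e^j$ and Lambert-$W$ inversions like yours.

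\textbf{The gap.} Your per-depth step is false as stated. You claim every cell opened at depth $h$ before the optimal child is $3\nu\rho^h$-near-optimal, so that $m_j(h)\ge C\rho^{-dh}$ suffices. The induction hypothesis only gives that the optimal parent was opened at level $\ge j$, possibly exactly $j$. Now look at the early rounds $m\le\tilde\Lambda/(he^{j+1})$, whose level $j_m=\lfloor\log(\tilde\Lambda/(hm))\rfloor$ exceeds $j$. In those rounds the optimal child has $T_{h,i^\star_h,j_m}=0$, so it is not a candidate at all. These rounds may therefore open cells far from optimal, for instance children of suboptimal parents that were themselves opened early at high levels. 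Your inequality for cells ranked above the optimal child is correct, but only in rounds where the optimal child is actually being ranked.

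\textbf{The repair} is exactly the paper's argument: count only the rounds whose level is exactly $j$,
\[
n_h=\#\Bigl\{m:\ \tfrac{\tilde\Lambda}{he^{j+1}}<m\le\tfrac{\tilde\Lambda}{he^{j}}\Bigr\}\ \ge\ \tfrac{\tilde\Lambda}{2he^j}-1\ \ge\ \tfrac{\tilde\Lambda}{4he^j},
\]
where the last step uses $\tilde\Lambda/(2he^j)\ge 2C\rho^{-dh}\ge 2$. In each such round, if the optimal child is still unopened, it is a level-$j$ candidate, so the opened cell is near-optimal. Then $n_h\ge C\rho^{-dh}$, together with the optimal cell itself, contradicts the near-optimality count. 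Hence the correct condition is $\tilde\Lambda/(4\tilde h e^j)\ge C\rho^{-d\tilde h}$. The factor $4$ comes from discarding the higher-level rounds (the factor $1-1/e\ge 1/2$) plus the integer rounding, not from rounding alone.

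\textbf{Smaller omissions.}
\begin{enumerate}
\item In both low-budget regimes you must verify the bias condition at depth $h_2$, not $h_1$. This requires $h_2\le h_1$, which the paper obtains by comparing $h\rho^{-dh}$ at $h_1$ and $h_2$ using the low-budget inequality.
\item Under Assumption 2(c), when $\tilde\Lambda<a$ the final cost-$\tilde\Lambda$ evaluations carry no guarantee. In that case $h<1$, and you need the trivial bound $r_\Lambda\le\nu\le\frac{\nu}{\rho}\rho^h$.
\end{enumerate}
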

\setcounter{corollary}{3}
\begin{corollary}[Regret decreasing rates]
\label{co:ub}
Following Theorem 3
(the exact upper bounds used for the rates are given in appendix): 

\tabulinesep=1mm
\begin{tabu}{ |c||c|c|  }
 \hline
 \assumsa&High budget&Low budget\\
 \hline\hline
 When $d=0$& \multicolumn{2}{|c|}{$\bigOtilde(\Lambda^{-\alpha})$}\\
 \hline
 When $d>0$& $\bigOtilde(\Lambda^\frac{-1}{d+1/\alpha})$&$\bigOtilde(\Lambda^\frac{-1}{d}+\Lambda^{-\alpha})$\\
 \hline
\end{tabu}

\tabulinesep=1mm
\begin{tabu}{ |c||c|c|  }
 \hline
 \assumsb&High budget&Low budget\\
 \hline\hline
 When $d=0$& $e^{\bigOtilde(-\Lambda^\frac{\beta}{1+\beta})}$&$e^{\bigOtilde(-\Lambda^{\beta})}+e^{\bigOtilde(-\Lambda)}$\\
 \hline
 When $d>0$& \multicolumn{2}{|c|}{$\bigOtilde(\Lambda^\frac{-1}{d})$}\\
 \hline
\end{tabu}

\tabulinesep=1mm
\begin{tabu}{ |c||c|  }
 \hline
 \multicolumn{2}{|c|}{\assumsc}\\
 \hline\hline
 When $d=0$& $e^{\bigOtilde(-\Lambda)}$\\
 \hline
 When $d>0$&$\bigOtilde(\Lambda^{\frac{-1}{d}})$\\
 \hline
\end{tabu}

\end{corollary}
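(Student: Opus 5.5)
The corollary is obtained by plugging $\hmax=\Theta(\Lambda/(\log\Lambda)^2)$ (its value from the initialization of Algorithm~\ref{al:kometo}) into the bounds of Theorem~\ref{thm:ub}. The only analytic tool needed is the asymptotic $W(x)=\log x-\log\log x+o(1)$. From it, $\rho^{\frac{1}{\gamma}W(\kappa x)}=\exp\left(-\frac{\logrho}{\gamma}W(\kappa x)\right)=\bigOtilde\left(x^{-\logrho/\gamma}\right)$, because the $-\log\log x$ term only contributes a polylogarithmic factor. The plan is to handle each assumption and each budget regime in turn and keep only the dominant term.

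\textbf{Under \assumsa.} In the high-budget case, $h_1=\frac{1}{\dalpharho}W(\Theta(\hmax))$. The identity above gives $\rho^{h_1}=\bigOtilde(\hmax^{-1/(d+1/\alpha)})$. The second term $A\hmax^{-\alpha}=\bigOtilde(\Lambda^{-\alpha})$ is dominated, since $\frac{1}{d+1/\alpha}\le\alpha$. This yields $\bigOtilde(\Lambda^{-1/(d+1/\alpha)})$; when $d=0$ it is exactly $\bigOtilde(\Lambda^{-\alpha})$. In the low-budget case with $d>0$, the same computation on $h_2$ gives $\rho^{h_2}=\bigOtilde(\hmax^{-1/d})$, so the regret is $\bigOtilde(\Lambda^{-1/d}+\Lambda^{-\alpha})$. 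For $d=0$, $h_2=\hmax/(4C)$ is linear, so $\rho^{h_2}$ is exponentially small and the $\Lambda^{-\alpha}$ term dominates. That is why the $d=0$ entry is the same in both regimes.

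\textbf{Under \assumsb.} For $d=0$, $h_1\propto\hmax^{\beta/(\beta+1)}$, so $\rho^{h_1}=e^{-\Theta(\hmax^{\beta/(1+\beta)})}$. The term $Be^{-\hmax^\beta/\sigma}$ is smaller, which gives $e^{\bigOtilde(-\Lambda^{\beta/(1+\beta)})}$. The low-budget $d=0$ case uses the linear $h_2$ and gives $e^{\bigOtilde(-\Lambda)}+e^{\bigOtilde(-\Lambda^\beta)}$. For $d>0$, $h_1=\frac{\beta+1}{\beta d\logrho}W(\Theta(\hmax^{\beta/(\beta+1)}))$, and the exponents cancel to give $\rho^{h_1}=\bigOtilde(\hmax^{-1/d})$. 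Likewise $h_2$ gives $\bigOtilde(\hmax^{-1/d})$. The exponential bias term is negligible in both, so both regimes give $\bigOtilde(\Lambda^{-1/d})$.

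\textbf{Under \assumsc.} The single $h$ is linear in $\hmax$ when $d=0$, giving $e^{\bigOtilde(-\Lambda)}$. It is $\frac{1}{d\logrho}W(\Theta(\hmax))$ when $d>0$, giving $\bigOtilde(\Lambda^{-1/d})$.

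The main obstacle is the bookkeeping, not a conceptual step. The $\bigOtilde$ notation must absorb every $\log\Lambda$ factor, both from $\hmax$ and from the $\log\log$ correction in $W$; inside exponents this means writing $\hmax=\Lambda/\mathrm{polylog}(\Lambda)$. One also has to check that the budget-regime conditions are consistent with large $\Lambda$: as $\Lambda\to\infty$, $h_1\to\infty$, so eventually the high-budget case applies, and the low-budget rows describe intermediate $\Lambda$. For that reason I would state the exact explicit bounds, with constants, in the appendix and derive the table entries from them.
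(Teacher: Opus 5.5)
Your proposal is correct and follows essentially the paper's own route. You substitute $\tilde{\Lambda}=\Theta(\Lambda/(\log\Lambda)^2)$ into the bounds of Theorem~3 and lower-bound the Lambert function by $\log x-\log\log x$, which turns $\rho^{h}$ into a polynomial rate (or an exponential one when $h$ is linear in $\tilde{\Lambda}$). The only difference is that the paper replaces your $o(1)$ expansion with the non-asymptotic inequalities $W(x)\ge \log x-\log\log x$ for $x\ge e$ and $W(x)\ge x/e$ for $x\le e$, which gives exactly the explicit bounds you say you would state.
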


As explained in the next paragraph, in practice and for asymptotic comparisons only the results for high budget settings are relevant.

We can notice that the rates of decreasing are better until the threshold for high budget. This is because, until the threshold, the algorithm does not have to focus on increasing the fidelity cost to improve the result, since the improvements in the regret it can make by exploring more cells is vastly superior to the improvements it can make with more precise analysis (which involves more precise evaluations: a higher fidelity). This explains why the rates are close to the one obtained on single-fidelity optimisation (or, similarly, on \assumsc, which materializes this case). However, the low budget case actually requires a very low budget (or very accurate fidelities) so these rates are not really relevant in practice. This dichotomy was similarly noticed, by \cite{bartlett2019simple} for the StroquOOL algorithm, in a stochastic case: using only one evaluation was enough as long as the noise did not exceed the potential regret that could be obtained.

\begin{figure*}
\begin{center}\vspace{-.5cm}
\includegraphics[width=0.35\textwidth]{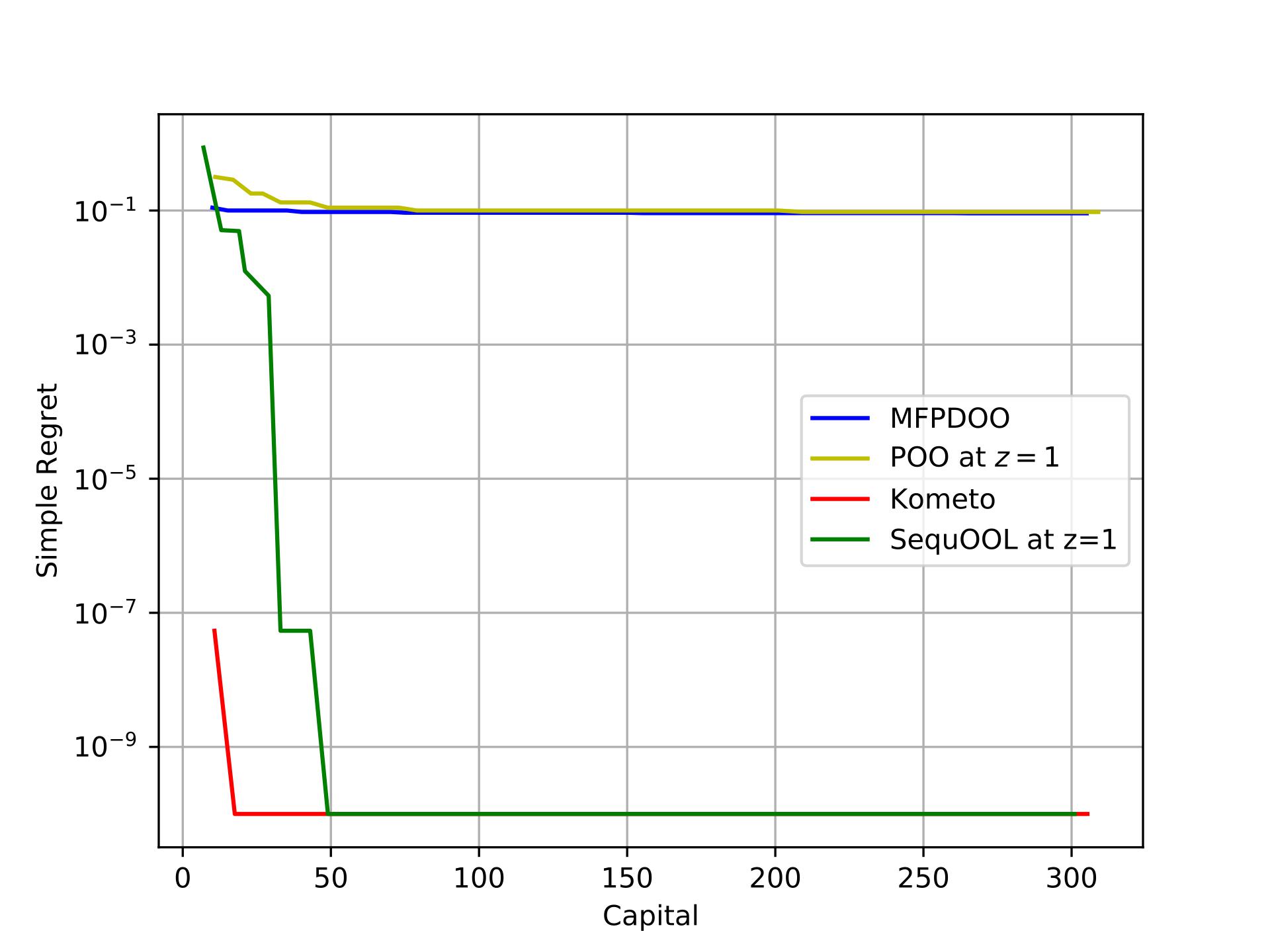}\hspace{-.7cm}
\includegraphics[width=0.35\textwidth]{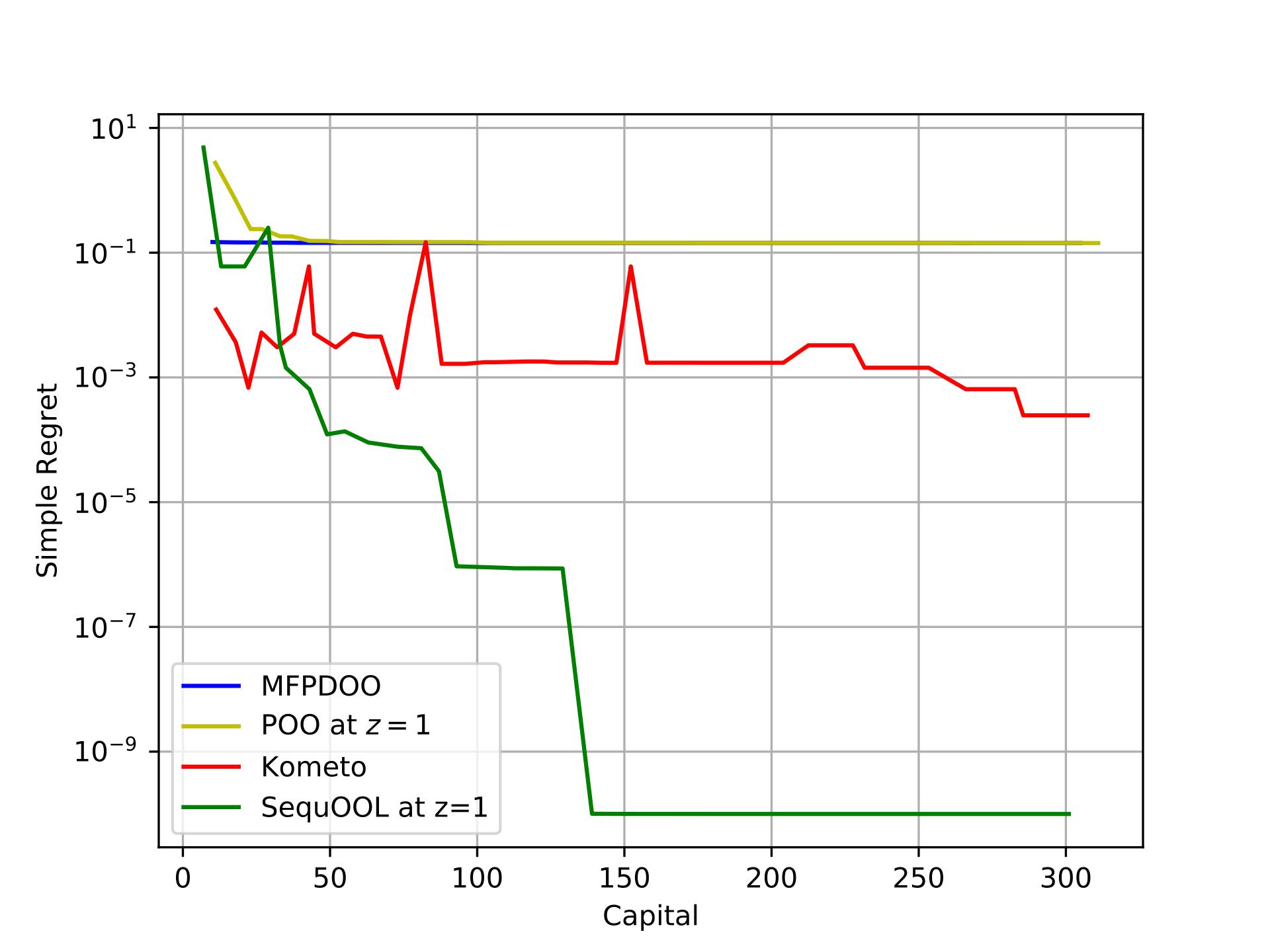}\hspace{-.7cm}
\includegraphics[width=0.35\textwidth]{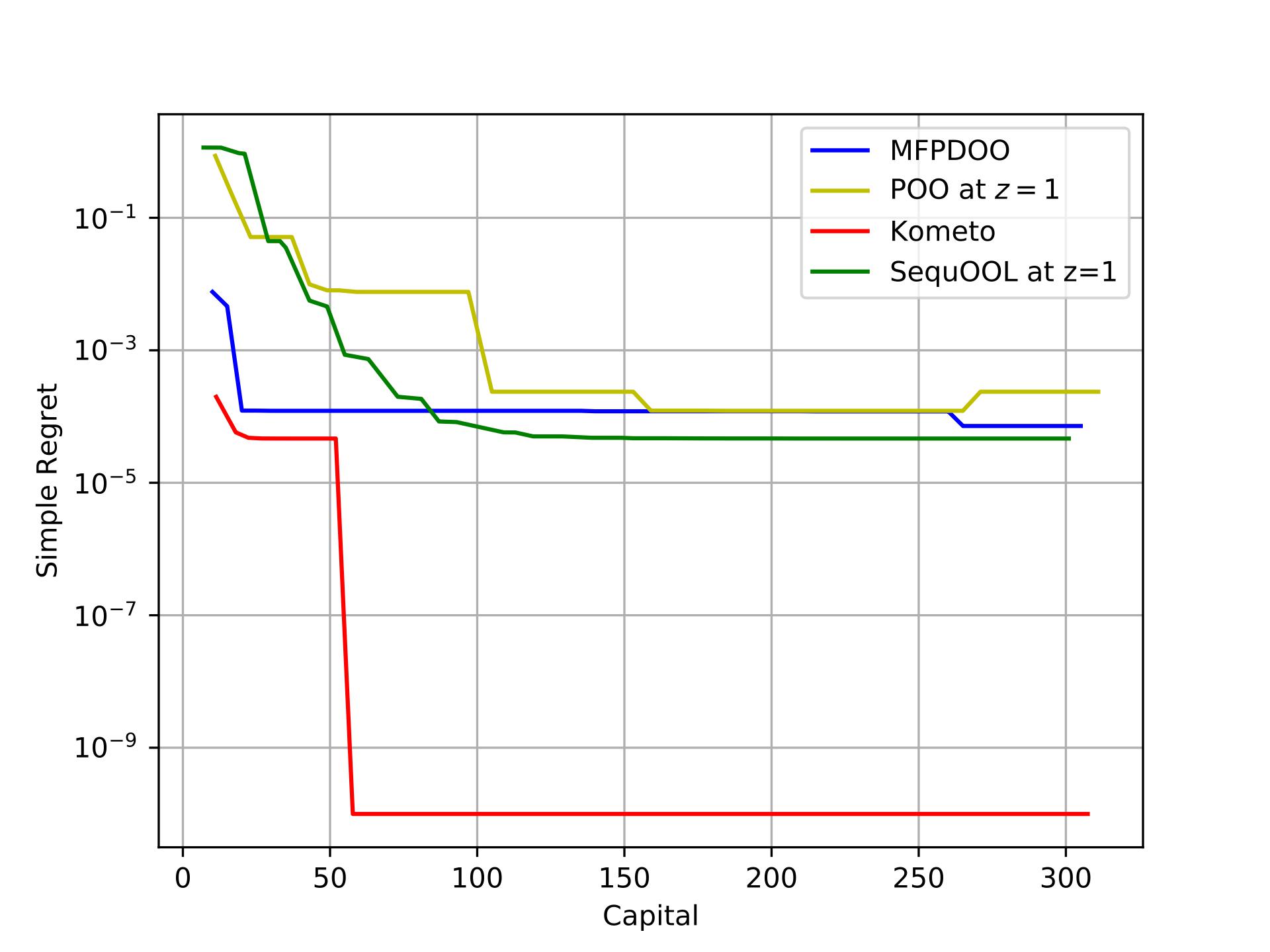}
\includegraphics[width=0.35\textwidth]{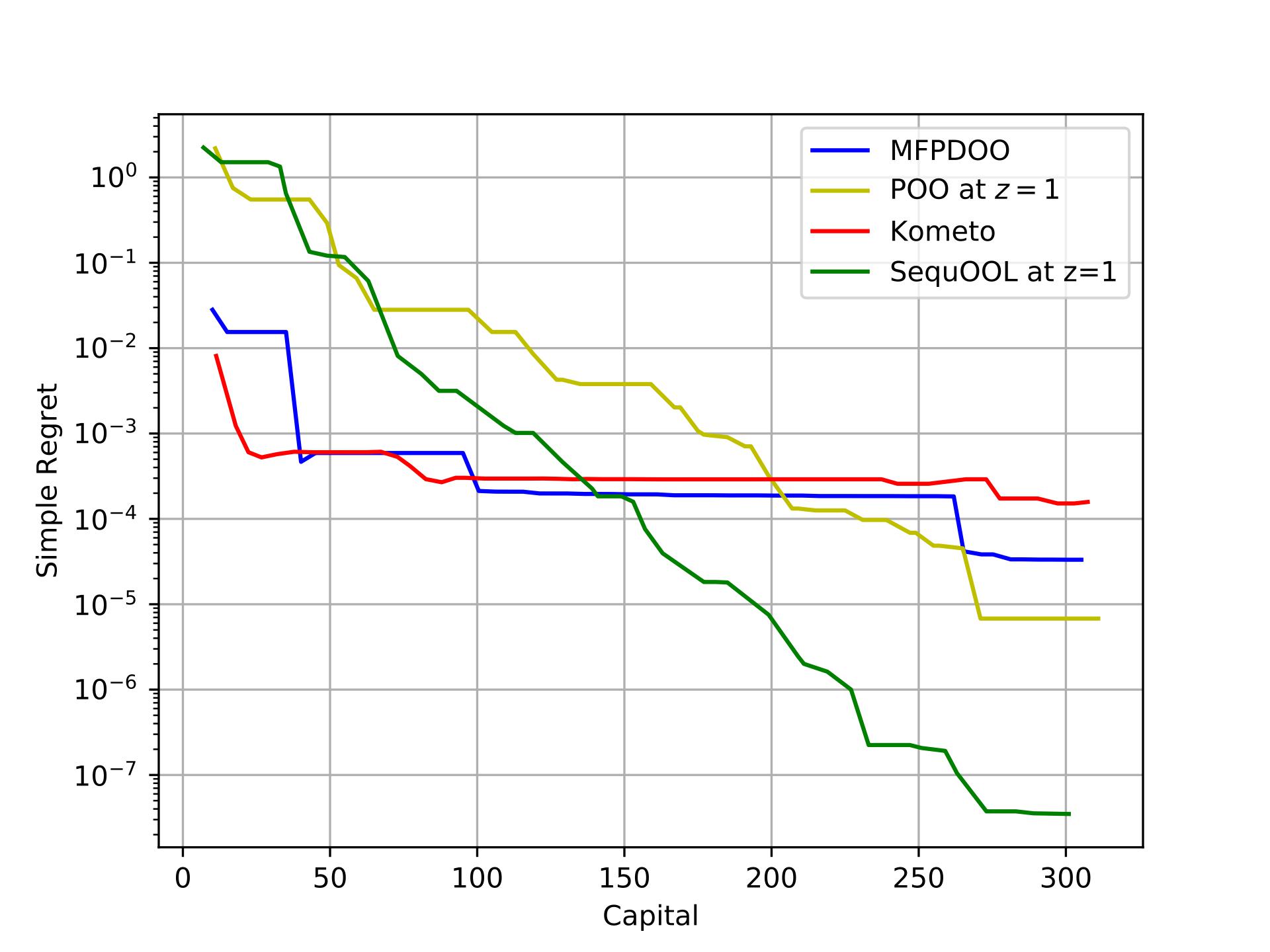}\hspace{-.7cm}
\includegraphics[width=0.35\textwidth]{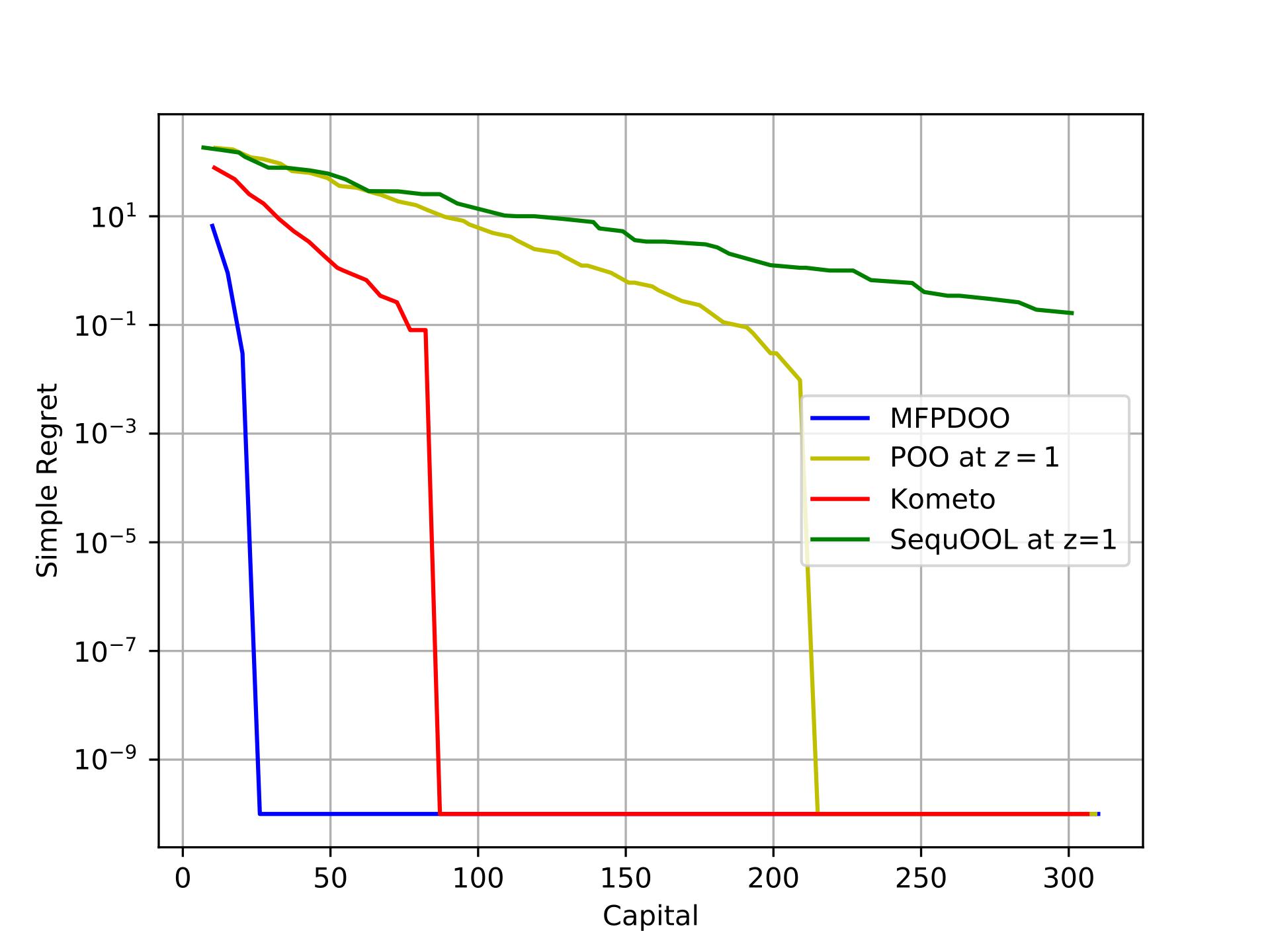}\hspace{-.7cm}
\includegraphics[width=0.35\textwidth]{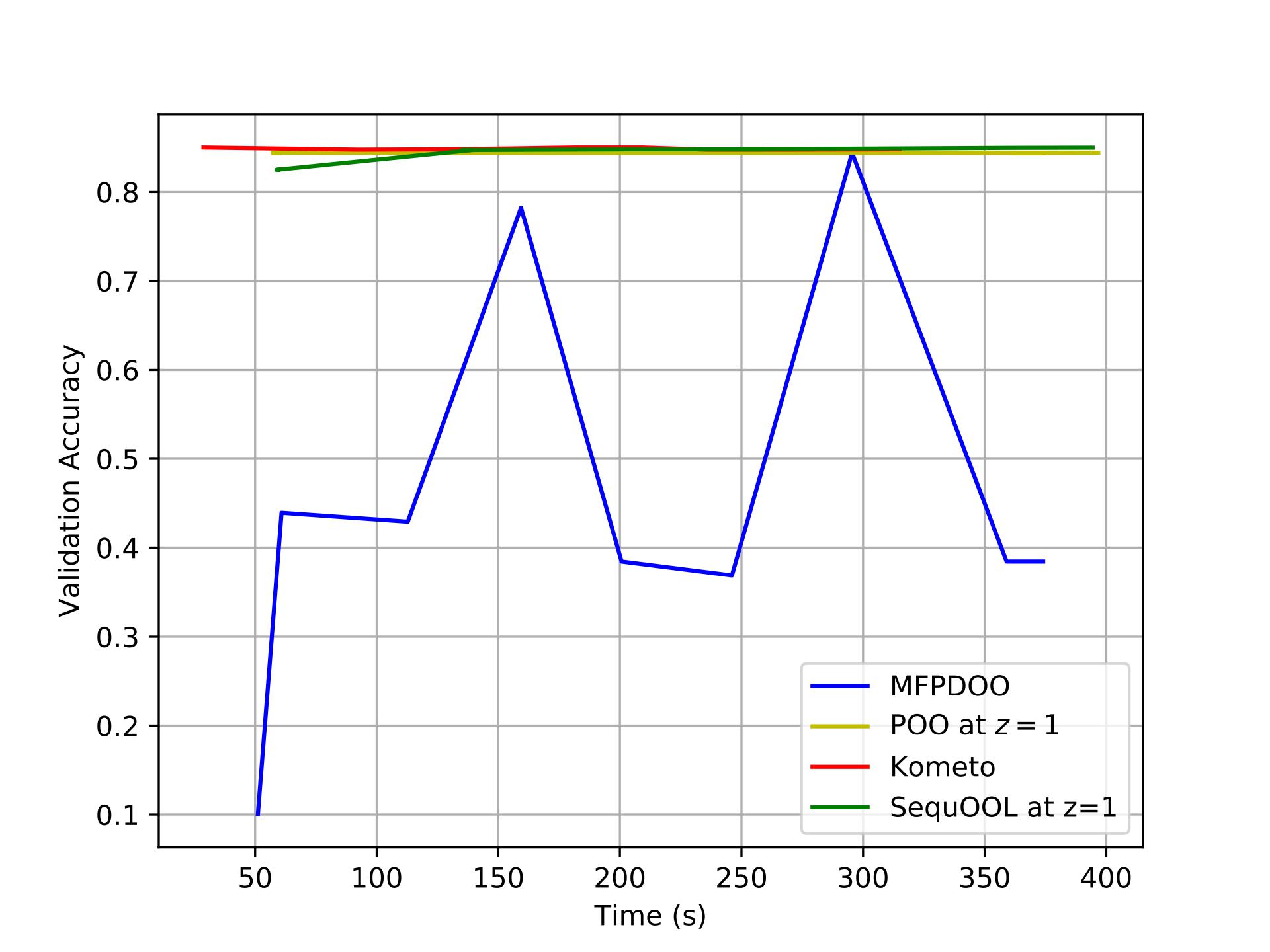}
\caption{\textbf{(a) top-left}: Curin 2-dimensions,
\textbf{(b) top-center:} Branin 2-dimensions
\textbf{(c) top-right:} Hartman3d 3-dimensions
\textbf{(d) bottom-left:} Hartman6d 6-dimensions
\textbf{(e) bottom-center:} Borehole 8-dimensions
\textbf{(f) bottom-right:} SVM 2-dimensions.
Experiments are composed of five synthetic experiments, from (a) to (e), and one real-world, (f).
The multi-fidelity algorithms can use all fidelities, while the non multi-fidelity algorithms only request at fidelity $z=1.$
The x-axis gives the budget effectively used by the algorithm, to reflect algorithm which exceed the attributed budget. The y-axis denotes the regret for the synthetic experiments (the lower the better), and the accuracy for the SVM experiment (the higher the better).
For readability, the graph only plot the regret down to $10^{-10}$.\vspace{-.63cm}
}\end{center}\vspace{-.1cm}
\end{figure*}

\section{Empirical results}
We chose to do the same synthetic and practical deterministic experiments as those done in \cite{sen2018}, and used their code for fair comparisons. The algorithms to which \kometo is compared are MFPDOO (\cite{sen2018}), POO (\cite{grill2015black-box}) and SequOOL (\cite{bartlett2019simple}). We directly used \kometo without any tweaking. This shows \kometo's adaptability, which only needed the cost function and the space $\functionspace$ in order to work.

\paragraph{Experiments explanation}
Five of them are synthetic deterministic experiments of different, but always low, dimensions. The budget is expressed in terms of the number of multiple of the highest fidelity cost $\lambda(1)$. Note that these experiments may easily be unfair toward non multi-fidelity algorithms, because the results of the multi-fidelity algorithms heavily depends on how useful the low fidelities are, which is arbitrary on synthetic experiments. Therefore, since non multi-fidelity algorithms have no access to low fidelities and thus have less information, synthetic experiments should not be used to directly compare the efficiency of a multi-fidelity and a non multi-fidelity algorithm.

The last experiment aims to measure the efficiency of the algorithms in practical settings. It involves tuning two hyperparameters for text classification, with the number of samples used to obtain 5-fold cross-validation accuracy determined by the fidelity. The budget is, for this experiment, determined by the time used by the algorithm to return its result, reflecting simultaneously the 
actual time used for the algorithm execution and the cost of computing the accuracies.

Details about the experiments, along with comparisons to other multi-fidelity algorithms, can be found in \cite{sen2018}.

\paragraph{Experiments analysis}
We can notice that \kometo largely outperforms MFPDOO on three of the synthetic experiments (Branin, Curin and Hartman3d) and on the practical experiment. It however gets beaten by MFPDOO on Borehole and Hartman6d by a relatively small margin. For the Hartman3d and Curin experiments, the better results of \kometo could be explained with its rank-based property, low fidelities may give highly accurate information on the way close points compare each other on the target function.

Interestingly, SequOOL outperforms \kometo on the Branin and Hartman6d experiments. This happens because, for these experiments, a lot of high-fidelity evaluations are needed to minimize the regret. Since \kometo keeps an important portion of its budget for low-fidelity evaluations, it is late compared to SequOOL which only does high-fidelity evaluations. This is materialized in the theoretical guarantees by the fact that \kometo has additional logarithmic factors compared to SequOOL under \assumsc.

\section{Discussion}

\paragraph{Possible stochastic settings} Our algorithm works in deterministic settings. However, our hypothesis of a bounded bias can be replaced with an hypothesis of a noise (potentially biased), with the same bounds. Our algorithm can therefore work in stochastic settings, the guarantees being given instead at high probability with a cost-to-bias function changed accordingly.

However, in cases where the noise does not naturally decrease to $0$ at higher fidelities, the $\Phi$ function will not decrease to $0$ either although required by \ref{ass:smo}. This issue can be resolved by gradually increasing the number of evaluations at higher fidelities, to get a $\Phi$ function that would converge to $0$. Indeed using concentration inequalities, we could then have Assumption \ref{ass:smo} true with high probability, which could bound the regret.

\paragraph{Cumulative regret in adaptive multi-fidelity optimization}
\cite{locatelli2018adaptivity} states that the minimax optimal cumulative regret with the knowledge of the smoothness cannot be attained by single-fidelity algorithms without the knowledge of the smoothness of the function. We wonder if this result remains true in multi-fidelity settings using adapted cumulative regret definitions.


\clearpage
\bibliography{references,bandits,come}
\bibliographystyle{apalike}
\newpage
\onecolumn
\appendix

\section{Proof of Theorem~\ref{thm:lb}}

In the following proof, we set $\lambda$ a bijective cost function, and $\zeta(z)=\Phi(x_z)$, where $\Phi$ is the upper bound of the cost-to-bias function given by the assumptions. 

We first give a way to construct target and fidelity function for the lower bounds.
Let $\trunctreea$ a truncated tree of $\partitioning$ (more precisely, a tree with the same root as $\partitioning$, and included in $\partitioning$). $\trunctreea$ nodes are therefore sub spaces of $\functionspace$.\\\\
We then define:
\begin{itemize}
    \item For all $h\geq 0$, $\trunctreeh$, the union of all the sub spaces associated to nodes of $\trunctreea$ of depth $h$. We can notice that $(\trunctreeh)_{h\geq0}$ is a decreasing sequence for the inclusion, with $T^0_a=\partitioning_{0,0}=\functionspace$.\\\\
   \item  $\targettrunctree$ by $\targettrunctree(x)=\sup_{h\geq0} \{-\nu\rho^h | x\in \trunctree_a^h\}$, the target function\\\\
   \item  $(\fidtrunctree)_{z\in Z}$ by $\fidtrunctree(x)=min\{f^{\trunctree_a}(x), -\zeta(z)\}$, the fidelities
\end{itemize}

We now state Lemma 5, related to the constructions above. It shows that the previous target and fidelity functions can be used as counter-examples in order to show the lower bounds.

\setcounter{lemma}{4}
\begin{lemma}\label{lem:fun}Let $\trunctree_a$ be a truncated tree of $\partitioning$ such that, for every depth $h\geq 0$, $\trunctreea$ has between $1$ and $\left\lfloor \rho^{-dh}\right\rfloor$ cells of depth h.\\

Then:\\
- $f^{\trunctreea} \in \smoothset$\\
- $(\fidtrunctree)_{z\in Z}\in F(\assumsymbol,\targettrunctree,\lambda)$
\end{lemma}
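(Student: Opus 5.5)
We verify the two memberships separately, working straight from the definitions of $\targettrunctree$ and $\fidtrunctree$.

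\textbf{Smoothness.} First we locate a global optimum. The nested sets $\trunctreeh$ are nonempty, since $\trunctreea$ has at least one cell at every depth. Following a branch of $\trunctreea$ gives a decreasing sequence of nonempty cells $\partitioning_{h,i_h}$ with $\partitioning_{h,i_h}\subseteq \trunctreeh$. We take $\optimum$ in their intersection and get $\targettrunctree(\optimum)=\sup_h(-\nu\rho^h)=0$, which is the maximum because $\targettrunctree\le 0$ everywhere. If the intersection is empty (cells need not be closed), we pick the branch and the optimum more carefully: one option is to note that $\sup \targettrunctree = 0$ is approached, and that the lower-bound proof only needs a maximizer on the chosen branch. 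This is the step I expect to be the main obstacle. The fallback is to assume nested cells have nonempty intersection, as is implicit in the hierarchical partitioning framework.

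Next we check Assumption~\ref{ass:opt}. The depth-$h$ cell containing $\optimum$ is a node of $\trunctreea$, so every $x$ in it lies in $\trunctreeh$ and satisfies $\targettrunctree(x)\ge -\nu\rho^h=\targettrunctree(\optimum)-\nu\rho^h$.

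For the near-optimality dimension, fix a depth $h$. A cell $\partitioning_{h,i}$ is counted in $\ncell_h(3\nu\rho^h)$ only if it contains a point $x$ with $\targettrunctree(x)\ge -3\nu\rho^h$. Such an $x$ lies in $\trunctree_a^{h'}$ for some $h'$ with $\rho^{h'}\le 3\rho^h$, that is $h'\ge h-h_0$ where $h_0=\lfloor \log 3/\logrho\rfloor$. When $h\le h_0$ we use the trivial bound $K^h\le \cmin\rho^{-dh}\le C\rho^{-dh}$. When $h>h_0$, the cell meets a node of $\trunctreea$ at depth $h-h_0$, so it is a descendant of one of the at most $\rho^{-d(h-h_0)}$ such nodes. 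This gives at most $K^{h_0}\rho^{-d(h-h_0)}=\cmin\rho^{-dh}\le C\rho^{-dh}$ cells. Hence $d$ is a near-optimality dimension with constant $C$, and $\targettrunctree\in\smoothset$.

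\textbf{Fidelities.} We take $g_z$ to be the identity, which is strictly increasing. Since $\targettrunctree\le 0$, we have $|\targettrunctree-\min(\targettrunctree,-\zeta(z))|=\max(0,-\zeta(z)-\targettrunctree)$. This is at most $\zeta(z)$ only if $\targettrunctree\ge -2\zeta(z)$, which can fail, so the construction must be read with the clipping from below. Concretely, I would interpret $\fidtrunctree$ as $\max(\targettrunctree,-\zeta(z))$ (or check the intended sign). Then the deviation from $\targettrunctree$ is $\max(0,-\zeta(z)-\targettrunctree)\le\zeta(z)$ once we use $\targettrunctree\ge-\nu$, or it is bounded directly by $\zeta(z)$ in the $\max$ version. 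Finally, with $\lambda$ bijective and $\zeta(z)=\Phi(\lambda(z))$, the cost-to-bias function equals the $\Phi$ of the assumption $\assumsymbol$, so $\assumsymbol$ holds and $(\fidtrunctree)_{z\in Z}\in F(\assumsymbol,\targettrunctree,\lambda)$.
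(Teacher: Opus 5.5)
Your smoothness half follows the paper's argument. It uses the same branch for Assumption~\ref{ass:opt} and the same count of near-optimal cells through $h_0=\lfloor \log 3/\log(1/\rho)\rfloor$: descendants of the at most $\rho^{-d(h-h_0)}$ nodes of $\trunctreea$ at depth $h-h_0$, plus $K^h\le\cmin\rho^{-dh}$ for $h\le h_0$, which uses $d\le\dmax$. Your worry about an empty intersection of nested cells is fair, but the paper does not resolve it either. It only shows $\sup\targettrunctree=0$ and that every point of the cells $\partitioning_{h,i_h}\in\trunctreea$ is within $\nu\rho^h$ of that supremum, so your fallback is no weaker than what the paper does. (That a branch exists at all is K\"onig's lemma, since $\trunctreea$ is infinite and each node has at most $K$ children.)

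The genuine gap is in the fidelity half, and it comes from a sign error. Since $\min(\targettrunctree,-\zeta(z))\le\targettrunctree$, one has
\[
\targettrunctree-\min\bigl(\targettrunctree,-\zeta(z)\bigr)=\max\bigl(0,\targettrunctree+\zeta(z)\bigr)\in[0,\zeta(z)],
\]
where the last step uses $\targettrunctree\le 0$. So the construction as stated already gives $\Vert\targettrunctree-\fidtrunctree\Vert_\infty\le\zeta(z)$ with $g_z$ the identity, and no reinterpretation is needed. This is exactly the paper's two-case check: equality where $\targettrunctree\le-\zeta(z)$, and a gap $\targettrunctree+\zeta(z)\le\zeta(z)$ elsewhere.

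Your replacement $\max(\targettrunctree,-\zeta(z))$ is the version that actually fails. Its deviation $\max(0,-\zeta(z)-\targettrunctree)$ exceeds $\zeta(z)$ wherever $\targettrunctree<-2\zeta(z)$, and the bound $\targettrunctree\ge-\nu$ only rescues this when $\zeta(z)\ge\nu/2$. For example, at a fidelity with $\zeta(z)=0$ it is identically $0$, while $\targettrunctree<0$ off $\bigcap_h\trunctreeh$. Since it is constant there, no strictly increasing $g_z$ can fix it.

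It would also defeat the purpose of the construction in Lemma~\ref{lem:lem6}. That lemma needs every evaluation with $\zeta(z)>\nu\rho^h$ to return the constant $-\zeta(z)$ throughout $\trunctreeh$, so that cutting the tree below depth $h+1$ is invisible to such evaluations. The $\min$ clipping flattens the near-optimal region. Your $\max$ version instead flattens the bad region and exposes the near-optimal structure at every fidelity.
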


\begin{proof}We start by showing that $f^{\trunctreea} \in \smoothset$.\\
As $\trunctreea$ has an infinite depth, there exists a sequence $\left(i_h\right)_h\in\mathbb{N}$ of indexes such that $\forall h\in\mathbb{N}$, $\{\partitioning_{h,i_h}\}\in\trunctreea$. By taking $\left(x_h\right)_{h\in\mathbb{N}}$, such that for all $h\in\mathbb{N}$ $x_h\in\partitioning_{h,i_h}$, we have that $\sup \limits_{h\in\mathbb{N}}\targettrunctree\left(x_h\right)=0$, which means $\sup \limits_{x\in X}\targettrunctree\left(x\right)=0$\\
Furthermore, we have that, for all $h\in\mathbb{N}$, for all $x\in\partitioning_{h,i_h}$, $\targettrunctree(x)\geq -\nu\rho^h$ by definition of $\targettrunctree$, which is equivalent to $\targettrunctree(x)\geq\sup\limits_{x'\in X}\targettrunctree\left(x'\right) -\nu\rho^h$. We therefore have that \assumf is true for $\targettrunctree$ for the smoothness parameters $\left(\nu,\rho\right)$.\\\\
We now show that $d$ is a near-optimality dimension of $\targettrunctree$ for these smoothness parameters. Let $h_o=\left\lfloor\frac{\log 3}{\logrho}\right\rfloor$ (we defined $\cmin$ with $\cmin=\left(\frac{K}{\rho^{-d}}\right)^{h_o}$). Let $h\in\mathbb{N}$, let's show that the related assumption is true at depth $h$:\\
If $h\leq h_o$, then the number of near-optimal cells at depth $h$ is simply bounded by the number of cells of depth $h$, $K^h$. As \[K^h=\frac{K^{h_o}}{K^{h_o-h}}\leq\frac{K^{h_o}}{\rho^{-d(h_o-h)}}=\left(\frac{K}{\rho^{-d}}\right)^{h_o}\rho^{-dh}\leq \cmin\;\rho^{-dh},\] the hypothesis is true at depth $h$.\\
If $h\geq h_o$, then thanks to hypothesis (2), there are at most $\left\lfloor\rho^{-d(h-h_o)}\right\rfloor$ cells of height $h-h_o$ such that $\sup\limits_{x\in\partitioning_{h-h_o,i}}\targettrunctree\left(x\right)>-\nu\rho^{h-h_o-1}$.
By taking for each of these cells its $K^{h_o}$ grandchildren, we get that there are at most $\left\lfloor K^{h_o}\rho^{-d(h-h_o)}\right\rfloor$ cells of depth $h$ such that $\sup\limits_{x\in\partitioning_{h,i}}\targettrunctree\left(x\right)>-\nu\rho^{h-h_o-1}$.\\
As \[\left\lfloor K^{h_o}\rho^{-d(h-h_o)}\right\rfloor\leq \cmin\;\rho^{-dh}\] and \[-\nu\rho^{h-h_o-1}=-\nu\rho^{h-\left\lfloor\frac{\log 3}{\logrho}+1\right\rfloor}\geq -\nu\rho^{h-\frac{\log 3}{\logrho}}=-3\nu\rho^h=\sup\limits_{x'\in X}\targettrunctree\left(x'\right) -3\nu\rho^h,\] there are at most $\cmin\rho^{-dh}$ near-optimal cells at depth $h$, which concludes. \\\\

We finally want to show that $(\fidtrunctree)_{z\in Z}\in F(\assumsymbol,\targettrunctree,\lambda)$. As we defined $\zeta(z)=\Phi(x_z)$, where $\Phi$ is the upper bound given by \assumf, we just need to show, by taking $g_z=Id$ for all $z\in Z$, that $\Vert \targettrunctree - \fidtrunctree\Vert_{\infty}\leq \zeta(z)$ for all $z\in Z$.\\
Taking $x\in X$, we either have \\
\alinea - $\targettrunctree(x)\leq -\zeta(z)$, and thus $\fidtrunctree(x)=\targettrunctree(x)$, or\\
\alinea - $\targettrunctree(x)> -\zeta(z)$ which implies $|\targettrunctree(x) - \fidtrunctree(x)|=\targettrunctree(x) + \zeta(z)\leq \zeta(z)$ as the function $\targettrunctree$ is non-positive by definition. \\The distinction between these two cases gives us that the bound on the infinity norm is true, and thus concludes.
\end{proof}
\setcounter{lemma}{5}
\begin{lemma}
\label{lem:lem6}
By choosing appropriate target functions and fidelity approximations, we can get two lower bounds of the regret with $\pi$:\\
\textbf{Lower bound a:} $\regretl\geq \sup\;\left\{ r\in]0,\frac{1}{2}\nu\rho]\;|\;\Lambda \leq \frac{1}{K}\left(\frac{2r}{\nu\rho}\right)^{-d}\:\inf\Phi^{-1}\left([0,\frac{2r}{\rho}]\right)\right\}$\\
\textbf{Lower bound b:} $\regretl\geq \sup\;\left\{ r\in]0,\frac{1}{4}\nu\rho^6]\;|\;\Lambda \leq \left(\frac{\log\frac{\nu}{4r}}{4\logrho}-2\right)\:\inf\Phi^{-1}\left(\left[0,\frac{\nu}{\rho}\left(\frac{4r}{\nu}\right)^\frac{1}{4}\right]\right)\right\}$
\end{lemma}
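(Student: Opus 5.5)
Both bounds will be proved by an adversarial construction built on Lemma~\ref{lem:fun}. We fix a target accuracy $r$ and a depth $H$ with $\nu\rho^{H}\approx 2r/\rho$ (bound a) or a suitable depth $H$ (bound b). We then build a random truncated tree $\trunctreea$ such that any output $\finalvalue$ lying outside the depth-$H$ nodes of $\trunctreea$ has $\targettrunctree(\finalvalue)\le -\nu\rho^{H-1}$, and hence regret at least of order $r$. By Lemma~\ref{lem:fun}, every such tree yields a valid instance, provided it has between $1$ and $\lfloor\rho^{-dh}\rfloor$ nodes at each depth $h$. We then average over the random choice of the tree: by Yao's principle, it suffices to show that a deterministic policy with budget $\Lambda$ identifies a depth-$H$ node of $\trunctreea$ with probability at most $1/2$. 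That gives $\regretl\ge r$, and taking the supremum over admissible $r$ yields the stated form.

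\textbf{Key observation on information.} Because $\fidtrunctree=\min\{\targettrunctree,-\zeta(z)\}$, an evaluation at fidelity $z$ reveals nothing about whether a point lies deeper than the level at which $-\nu\rho^{h}>-\zeta(z)$. So to distinguish membership at depth around $H$ (values $\ge -\nu\rho^{H}$ versus $\le-\nu\rho^{H-1}$), the policy must evaluate with bias at most about $\nu\rho^{H-1}\approx 2r/\rho$. Such an evaluation costs at least $c^\star:=\inf\Phi^{-1}([0,2r/\rho])$ (respectively $\inf\Phi^{-1}([0,\tfrac{\nu}{\rho}(4r/\nu)^{1/4}])$ for bound b). Cheaper evaluations return the constant $-\zeta(z)$ on every cell near the top of the tree, so they are useless for locating the branch. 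Hence $\Lambda$ buys at most $\Lambda/c^\star$ informative evaluations.

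\textbf{Bound a (width).} Let $\trunctreea$ be a single path down to depth $H-1$. At depth $H$ it has $N\approx\rho^{-dH}/K$ nodes, chosen uniformly at random among the $K$ children of suitably chosen cells (a union of roughly $(2r/\nu\rho)^{-d}$ candidates), and below depth $H$ it continues as single paths. Each informative query tests one candidate cell, so after $\Lambda/c^\star\le N/2$ queries, with $N\asymp\frac1K(2r/\nu\rho)^{-d}$, a uniformly hidden near-optimal cell is found with probability at most $1/2$. This gives exactly the condition $\Lambda\le \frac1K(2r/\nu\rho)^{-d}c^\star$. The restriction $r\le\frac12\nu\rho$ ensures $H\ge1$.

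\textbf{Bound b (depth).} Let $\trunctreea$ be a single random path, with $K\ge2$ choices at each level. Locating the path down to depth $H\approx \frac{\log(\nu/4r)}{4\logrho}$ requires at least one informative query per level; otherwise, with probability at least $1/2$, the policy guesses wrong at some level. So roughly $(H-2)c'$ budget is necessary, with the fidelity threshold computed at the shallowest level where a wrong guess still costs regret $r$. This explains the exponent $1/4$ and the offset $-2$: the argument only needs precision at depth about a quarter of $\log(\nu/r)/\logrho$. The condition $r\le\frac14\nu\rho^6$ guarantees that $H-2\ge1$. The nodes-per-depth condition holds trivially because there is one node per depth.

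\textbf{Main obstacle.} The main difficulty is making the ``cheap evaluations are uninformative'' claim rigorous against adaptive and randomized policies. One must show that the observations at fidelity with $\zeta(z)>\nu\rho^{H-1}$ are identical across all trees in the random family, so the posterior over the hidden cells changes only through expensive queries. In bound b there is a further complication: queries at intermediate precision reveal shallow levels of the path. The choice of the $1/4$ scaling is what balances these partial revelations, and we would handle them with a level-by-level counting argument rather than a single threshold.
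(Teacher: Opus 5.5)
Your bound a can be repaired along the paper's lines. Bound b, however, has a genuine gap.

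\textbf{Bound b.} Your key step, that locating the path ``requires at least one informative query per level; otherwise, with probability at least $1/2$, the policy guesses wrong at some level'', is false. A precise evaluation at $x$ returns $-\nu\rho^{D(x)}$, the exact depth to which $x$ follows the hidden path. So a wrong guess is detected at no extra cost, and a lucky guess advances several levels at once. Take $K=2$, a uniformly random path, and exact evaluations at a fixed cost (as under Assumption~2(c)). Always querying deep along a guessed continuation then advances the known depth by $1+G$ per query, with $G$ geometric of mean $1$. Hence slightly more than $H/2$ queries locate depth $H$ with probability tending to $1$ as $H$ grows, and your budget claim fails.

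What is missing is a way to control the levels gained by luck. The paper does this in three steps.
\begin{itemize}
\item It fixes the path deterministically down to depth $h$ and uses the single threshold $\nu\rho^h$. Cheaper evaluations then carry no information below depth $h$, and there are at most $t\le N_h=\lfloor\Lambda/c_h\rfloor+1\le h$ informative queries, counting the output as a free final top-fidelity query.
\item Below $h$ it picks, level by level, the child $\mathcal{P}_{h',i_{h'}}$ that minimizes the probability that the first informative query entering $\mathcal{P}_{h'-1,i_{h'-1}}$ falls into it. This probability is computed on the truncated tree, which induces the same law up to that moment, and it is at most $1/K$.
\item Let $M$ count the levels $h'\in\{h+1,\dots,4h\}$ where this happens. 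The deepest path level ever hit satisfies $a_t\le h+t+M\le 2h+M$ deterministically, and Markov gives $\mathbb{P}(M\ge 2h)\le 3/4$. So with probability at least $1/4$ the output misses the path cell of depth $4h$.
\end{itemize}
The factor $4$ (prefix, one level per query, Markov slack) is what produces the exponent $1/4$ and the offset. A true one-query-per-level argument would give exponent $1/2$, so ``balancing intermediate-precision revelations'' is not the mechanism. That complication is an artefact of starting your random path at the root.

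\textbf{Bound a.} Your averaging step is equivalent to the paper's choice of the least likely index, but the construction as written does not type-check. A single path down to depth $H-1$ cannot carry $\approx\rho^{-dH}/K$ nodes at depth $H$. And if those $N$ nodes are the hidden ones, the policy only has to hit one of them; with one hidden child per visible parent, about $K$ queries suffice.

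You need a deterministic visible part plus exactly one hidden continuation:
\begin{itemize}
\item a single path down to depth $h-s-1$, followed by a full $K$-ary expansion with $\Lambda/c_h<K^s\le\rho^{-dh}$; intermediate widths stay below $\lfloor\rho^{-dh'}\rfloor$ because $\rho^{-d}\le K$, i.e.\ $d\le d_{\max}$;
\item exactly one hidden single-path continuation below one of the $K^{s+1}$ children at depth $h+1$.
\end{itemize}
All these instances coincide until an informative query enters the hidden child. So the entering probabilities sum to at most the expected number $\mathbb{E}[t]\le N_h\le K^s$ of informative queries, and some child is entered with probability at most $1/K$. The power-of-$K$ rounding is where the $1/K$ in the statement comes from. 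Counting the output as a free final query is what turns ``not entered'' into ``regret at least $\nu\rho^h$''.
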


\begin{proof}
We will denote by $(\pointfidl)_{l\in \mathbb{N}}$ the family of random variables equal to the successive points requested by $\pi$ which has been given the fidelity approximations $\left(\fidtrunctree\right)$ (the values are null when $l$ exceeds the number of evaluations). We also denote, for a given depth $h$, by $\setfidh$ the (also random) set of indexes $l$ such that the fidelity z associated to the request of the point $\pointfidl$ respects $\zeta(z)\leq \nu\rho^h$.\\
The key to the following proof is that, for any depth $h\geq0$, for any tree $\trunctreea$, if $\trunctreeabis$ is the tree $\trunctreea$ whom nodes of depth $h'\geq h+1$ have been cut, then for any $l\geq 0$, $\pointfidl\; |\; (\forall m\in \setfidh, m<l \implies X^{\trunctreea}_m\not\in \trunctreea^{h+1})$ has the same distribution as $\pointfidlbis\; |\; (\forall m\in \setfidhbis, m<l \implies X^{\trunctreeabis}_m \not\in \trunctreea^{h+1})$. This is because, in this instance, all the previous evaluations were either at a fidelity z too low and were equal to $-\zeta(z)\leq -\nu\rho^h$, or were evaluated outside of the sub space of a node of $\trunctreea$ of depth h+1. In both cases, the returned values were unaffected by the cutting of the nodes of depth $h'\geq h+1$ of $\trunctreeabis$, which implies that the behavior of the algorithm will remain the same for the request of the point $X^{\trunctreeabis}_l$.\\

We thus define the idea of opening a cell C as, when C is of depth $h$, having a $l\in\setfidh$ such that $\pointfidl\in C$.\\ This means requesting a point of this cell with a high enough fidelity to differentiate the cells of $\trunctreea^{h+1}$ from the rest of the cells.\\

To simplify the proof, we will assume that the output of the algorithm is done with a last free evaluation at the highest fidelity. This evaluation will thus belong to $\setfidh$ for any $h\geq 0$, and the lemma will be proved by showing that this last evaluation has a certain chance not to belong to $\trunctreea^{h'}$ for some $h'$ when $\left(\fidtrunctree\right)$ is given to $\pi$.\\
We finally define, for any $h\geq 0$, $c_h=\inf\; \Phi^{-1}([0,\nu\rho^h])$ and $N_h=\left\lfloor\frac{\Lambda}{c_h}\right\rfloor$+1. $c_h$ is the minimum cost necessary to get an evaluation of bias $b\leq \nu\rho^h$, while $N_h$ is the maximum number of such evaluations an algorithm can get, including the last free one. This gives, for any truncated tree $\trunctreea$ and $h\geq0$, $\#\setfidh\leq N_h$

\textit{Reminder:} If $P_{h,i}$ is a cell of $\functionspace$, its children are the K cells $P_{h+1,Ki}$, ..., $P_{h+1,Ki+K-1}$.\\\\

We then prove the two different lower bounds:\\

\textbf{Lower bound a (width limitation)}:\\

This lowerbound is based on the idea that, at a given depth h, the algorithm may have the budget to only open a fraction of the children of the near optimal cells (remember that the idea of opening, previously defined, also takes into account the fidelity level). As the behavior of the algorithm is potentially random, we choose the cell C that is the least likely to be opened as the only one potentially containing the optimum (ie $\trunctreea^{h+1}=C$), and then bound the probability of its opening.\\\\
We first assume that there exists a non-negative integer h such that $\frac{\rho^{-dh}c_h}{\Lambda}\geq K$. Then, there also exists an integer s such that $\frac{\Lambda}{c_h}<K^s\leq \rho^{-dh}$. Because $\rho^{-dh}\geq 1$, we can assume $s$ to be non-negative. Since $K^s$ is an integer, we get that $K^s \geq \left\lfloor\frac{\Lambda}{c_h}\right\rfloor+1=N_h$. We also get that $K^s\leq \rho^{-dh}\leq K^h$ because of $d\leq\frac{log K}{\logrho}$, which implies $h-s\geq0$.\\

We now define the trees we will use, along with Lemma 5, to lower bound the regret:\\
We start with $\trunctree=\left(\bigcup\limits_{h'=0}^{h-s-1}\{\partitioning_{h',0}\}\right)\;\bigcup\; \left(\bigcup\limits_{h'=h-s}^{h}\bigcup\limits_{i=0}^{K^{h'-(h-s)}-1}\{\partitioning_{h',i}\}\right)$\\
We can first notice that $\trunctree$ partially satisfies the hypothesis of lemma 5. Indeed, the first $h-s$ depths of $\trunctree$ only have one node. For $h-s\leq h'\leq h$, there are $K^{h'-(h-s)}$ nodes of depth $h'$. Since $K^s\leq\rho^{-dh}\leq\rho^{-dh'}K^{h-h'}$, we indeed have $K^{h'-(h-s)}\leq \rho^{-dh'}$. However, the theorem also requires at least one node per depth.\\

We now define, for all integer i such that $0\leq i\leq K^{s+1}$,\\
$\trunctree_i=\trunctree\;\bigcup\left(\;\bigcup\limits_{h'=h+1}^{\infty}\{\partitioning_{h',K^{h'-(h+1)}i}\}\right)$\\
We can notice that, for all $i$, $\trunctree_i$ is a truncated tree of $\partitioning$, has at least one node per depth, and the upper bound is still verified, as only one node was added for each empty depth was added. It thus verifies the hypothesis of the lemma.\\\\
We also define $p_i=\mathbb{P}(\exists l\in \setfid^{T_i}_h, X^{\trunctreei}_l\in\partitioning_{h+1,i})$. It corresponds to the probability of the cell $\partitioning_{h',i}$ being opened when the algorithm is given $(f^{\trunctreei}_z)_{z\in Z}$. We then try to upper bound one of the $p_i$
\begin{align*} 
\lsumi p_i&= \lsumi \mathbb{P}\; \left(\bigcup\limits_{l\in\mathbb{N}} \left(l\in \setfid^{\trunctreei}_h, X^{\trunctreei}_l\in\partitioning_{h+1,i}\right)\right)\\
&= \lsumi \mathbb{P}\; \left(\bigsqcup\limits_{l\in\mathbb{N}} \left(l\in \setfid^{\trunctreei}_h, X^{\trunctreei}_l\in\partitioning_{h+1,i}\; \mathrm{and}\; \left(\forall m\in\setfid^{\trunctreei}_h, m<l \implies  X^{\trunctreei}_m \not\in \partitioning_{h+1,i}\right)\right)\right)\\
&= \lsumi\lsuml \;\mathbb{P}\; \left(l\in \setfid^{\trunctreei}_h, X^{\trunctreei}_l\in\partitioning_{h+1,i}\; \mathrm{and}\; \left(\forall m\in\setfid^{\trunctreei}_h, m<l \implies  X^{\trunctreei}_m \not\in \partitioning_{h+1,i}\right)\right)\\
&= \lsumi\lsuml \;\mathbb{P}\; \left(l\in \setfid^{\trunctree}_h, X^{\trunctree}_l\in\partitioning_{h+1,i}\; \mathrm{and}\; \left(\forall m\in\setfid^{\trunctree}_h, m<l \implies  X^{\trunctree}_m \not\in \partitioning_{h+1,i}\right)\right)\\
&\leq \lsumi\lsuml \;\mathbb{P}\; \left(l\in \setfid^{\trunctree}_h, X^{\trunctree}_l\in\partitioning_{h+1,i}\right)\\
&= \;\;\:\lsuml\lsumi\mathbb{P}\; \left(l\in \setfid^{\trunctree}_h, X^{\trunctree}_l\in\partitioning_{h+1,i}\right)\\
&\leq \;\;\:\lsuml\;\mathbb{P}\; \left(l\in \setfid^{\trunctree}_h\right)\\
&=\;\mathbb{E}\left(\# \setfid^{\trunctree}_h\right)\\
&\leq \;N_h\\
&\leq\;K^s
\end{align*}

We therefore have the existence of $i\leq K^{s+1}-1$ such that $p_i\leq\frac{1}{K}\leq\frac{1}{2}$.\\ By giving $\pi$ the fidelity approximations $(f^{\trunctreei}_z)_{z\in Z}$, since $\trunctreei$ respect the hypothesis of Lemma \ref{lem:fun}, we can lower bound the minimax regret with $\frac{1}{2}\nu\rho^h$\\\\

This result needed that $\frac{\rho^{-dh}c_h}{\Lambda}\geq K$ with h a positive integer, which can be rewritten $\frac{\rho^{-dh}\:\inf\Phi^{-1}\left([0,\nu\rho^h]\right)}{K}\geq\Lambda$.\\ Since this condition remains true if we take instead $h'\geq h$, we can just suppose that $\htilde$ is a positive real number such that $\frac{\rho^{-d{\htilde}}\:\inf\Phi^{-1}\left([0,\nu\rho^{\htilde}]\right)}{K}\geq\Lambda$, and in this case get $\regretl\geq\frac{1}{2}\nu\rho^{\lfloor \htilde\rfloor+1}\geq\frac{1}{2}\nu\rho^{\htilde+1}$. By replacing $\htilde$ with $\frac{1}{2}\nu\rho^{\htilde+1}$, this is the same thing as assuming that there exists $r\in]0,\frac{1}{2}\nu\rho]$ such that $\frac{\left(\frac{2r}{\nu\rho}\right)^{-d}\:\inf\Phi^{-1}\left([0,\frac{2r}{\rho}]\right)}{K}\geq \Lambda$ to get $\regretl\geq r$, which concludes.\\\\

\textbf{Lower bound b (depth limitation)}:\\

In this second lower bound, the idea is that, after a depth h, the depth an algorithm can consistently reach when exploring a branch is at most proportional to the number of opening at depth $h$. 
We first take $h$ such that $h\geq N_h$. We define recursively for $h'\geq h$, $(p_{h',i})_{0\leq i\leq K-1}$, $i_{h'}$ and $\trunctree_{h'}$ with

When $h'=h$:
$$\begin{cases}
        p_{h,i}=1\\
        i_h=0\\
        \trunctree_h=\bigcup\limits_{h_p=0}^{h}\{\partitioning_{h_p,0}\}
\end{cases}$$\\\\

When $h'>h$:
$$\begin{cases}
        p_{h',i}=\mathbb{P}\left( \exists l\in \setfid^{\trunctree_{h'-1}}_h,\; X^{\trunctree_{h'-1}}_l\in\partitioning_{h',Ki_{h'-1}+i} \; \mathrm{and}\; \left(\forall m\in\setfid^{\trunctree_{h'-1}}_h, m<l \implies X^{\trunctree_{h'-1}}_m\not\in\partitioning_{h'-1,i_{h'-1}}\right)\right)\\\\
        i_{h'}=Ki_{h'-1}+ \argmin\limits_{i} \;p_{h',i}\\\\
        \trunctree_{h'}=\trunctree_{h'-1}\;\bigcup\;\{\partitioning_{h',i_{h'}}\}
\end{cases}$$\\\\

We also define $\trunctree=\bigcup\limits_{h'=h}^{\infty}\;\trunctree_{h'}$\\
Since $\trunctree$ only has one infinite branch, we have that $\trunctree$ verifies the hypothesis of Lemma 5.

Because, for every depth $h'\geq h+1$, the $p_{h',i}$ are $K$ probability of disjoint events, we necessarily have that the probability associated to $i_{h'}$ is upper bounded by $\frac{1}{K}\leq\frac{1}{2}$.\\\\

We now define, for $h'\geq h+1$ the events $E_{h'}=\left(\exists l\in \setfid^{\trunctree}_h,\; X^{\trunctree}_l\in\partitioning_{h',i_{h'}} \;\mathrm{and}\; \left(\forall m\in\setfid^{\trunctree}_h, m<l \implies X^{\trunctree}_m\not\in\partitioning_{h'-1,i_{h'-1}}\right)\right)$, and the random variable $M=\sum\limits_{h'=h+1}^{4h}\;\mathbbm{1}_{E_{h'}}$
Let's first bound $M$ with a certain probability, using Markov inequality.
\begin{align*} 
\mathbb{E}(M)&=\sum\limits_{h'=h+1}^{4h}\;\mathbb{E}\left(\mathbbm{1}_{E_{h'}}\right)\\
&=\sum\limits_{h'=h+1}^{4h}\;\mathbb{P}\left(\exists l\in \setfid^{\trunctree}_h,\; X^{\trunctree}_l\in\partitioning_{h',i_{h'}} \;\mathrm{and}\; \left(\forall m\in\setfid^{\trunctree}_h, m<l \implies X^{\trunctree}_m\not\in\partitioning_{h'-1,i_{h'-1}}\right)\right)\\
&=\sum\limits_{h'=h+1}^{4h}\;\mathbb{P}\left(\exists l\in \setfid^{\trunctree_{h'-1}}_h,\; X^{\trunctree_{h'-1}}_l\in\partitioning_{h',i_{h'}} \;\mathrm{and}\; \left(\forall m\in\setfid^{\trunctree_{h'-1}}_h, m<l \implies X^{\trunctree_{h'-1}}_m\not\in\partitioning_{h'-1,i_{h'-1}}\right)\right)\\
&=\sum\limits_{h'=h+1}^{4h}\;p_{h',i_{h'}-Ki_{h'-1}}\\
&\leq\sum\limits_{h'=h+1}^{4h}\;\frac{1}{2}\\
&=\;\frac{3h}{2}
\end{align*}
We then get $\mathbb{P}\left(M\geq 2h\right)\leq\frac{\mathbb{E}(M)}{2h}\leq\frac{3}{4}$\\

We now define the event $B=\left(\forall l\in \setfid^{\trunctree}_h,\; X^{\trunctree}_l \not\in \partitioning_{4h,i_{4h}}\right)$ and show that $\left(M<2h\right)\subset B$.

We denote by ${j_1}$, ... , ${j_t}$ the different elements of $\setfid^{\trunctree}_h$ ranked from lowest to highest (with $t=\#\setfid^{\trunctree}_h$), and we define the $(a_r)_{0\leq r\leq t}$ with $a_0=h$ and $a_r=\max\;\{h'\in \llbracket h~;~ 4h \rrbracket\;|\; \exists m\leq r,\; X^{\trunctree}_{j_m}\in\partitioning_{h',i_{h'}}\}\bigcup\left\{h\right\}$ when $r>0$. This definition ensures that $a_r$ is superior or equal to the deepest opened depth after $r$ requests of points at fidelities $z$ of biais $\zeta(z)\leq \nu\rho^h$. We especially have that the sequence $(a_r)_{0\leq r\leq t}$ is non-decreasing. Note that all these values are random variables, and that $\left(a_t<4h\right)$ is exactly the event B.

We can then notice that, with these definitions, for any $h'\in\llbracket h+1~;~ 4h \rrbracket$,
\begin{multline}\left(\exists r\in \llbracket 0~;~ t-1 \rrbracket, X^{\trunctree}_{j_{(r+1)}}\in \partitioning_{h',i_{h'}}\; \mathrm{and}\; \left(\forall m<r+1, X^{\trunctree}_{j_m}\not\in \partitioning_{h'-1,i_{h'-1}}\right)\right)=\\\left(\exists l\in \setfid^{\trunctree}_h,\; X^{\trunctree}_l\in\partitioning_{h',i_{h'}} \;\mathrm{and}\; \left(\forall m\in\setfid^{\trunctree}_h, m<l \implies X^{\trunctree}_m\not\in\partitioning_{h'-1,i_{h'-1}}\right)\right)=E_{h'}\end{multline}

Using this equality, we can see that the existence of $r\in \llbracket 0~;~ t-1 \rrbracket $ such that $h'\in\llbracket a_r+2~;~ a_{r+1} \rrbracket$ implies $E_{h'}$. Counting the $h'$ then let us get that $\#\bigcup\limits_{r=0}^{t-1}\;\llbracket a_r+2~;~ a_{r+1} \rrbracket\leq \sum\limits_{h'=h+1}^{4h}\mathbbm{1}_{E_{h'}}$.

Since all the sets $\llbracket a_r+2~;~ a_{r+1} \rrbracket$ are disjoints (because $(a_r)_{0\leq r\leq t}$ is non-decreasing), we have $\sum\limits_{r=0}^{t-1}\left(a_{r+1}-a_l-1\right)\leq M$, which gives $a_t\leq a_0 + t + M$. As $h\geq N_h$ by hypothesis and $a_0=h$, we have $a_t\leq 2h + M$
Thus, $\left(M<2h\right)$ implies $\left(a_t<4h\right)$, ie $\left(M<2h\right)$ implies B.

Finally, with $\mathbb{P}\left(B\right)\geq \mathbb{P}\left(M < 2h\right)\geq \frac{1}{4}$, we can bound the regret with $\regretl\geq \frac{1}{4}\nu\rho^{4h}$ by giving $\pi$ the fidelity approximations $(f^{\trunctree}_z)_{z\in Z}$, since, as mentioned above, $\trunctree$ respects the hypothesis of Lemma 5.\\\\

To conclude, we do the same thing as for Lower bound a. The hypothesis was the existence of $h$ such that $h\geq N_h$ to get a bound $\nu\rho^{4h}$. It is especially the case when $(h+1)\:\inf\Phi^{-1}\left([0,\nu\rho^{h+1}]\right)\geq\Lambda$. This can similarly be changed to the existence of a real number $\htilde\geq 1$ such that $\htilde\:\inf\Phi^{-1}\left([0,\nu\rho^{\htilde}]\right)\geq\Lambda$ to get a bound $\frac{1}{4}\nu\rho^{4(\htilde+2)}$.\\
With $r=\frac{1}{4}\nu\rho^{(4\htilde+2)}$, we need $r\leq\frac{1}{4}\nu\rho^6$ and $\left(\frac{\log\frac{\nu}{4r}}{4\logrho}-2\right)\:\inf\Phi^{-1}\left(\left[0,\frac{\nu}{\rho}\left(\frac{4r}{\nu}\right)^\frac{1}{4}\right]\right)\geq\Lambda$\\\\
\end{proof}

\begin{proof}[Proof of Theorem \ref{thm:lb}:]
We now apply Lemma \ref{lem:lem6} to get the wanted lower bounds in the different cases.\\

Under \assumsa:\\

We first try to solve $y=\Phi(c)$ for $c\geq1$. We have $y=\frac{A}{c^\alpha}$, which means that $c=\left(\frac{y}{A}\right)^{-\invalpha}$  when $A\geq y$. This implies that, if $r\leq\frac{\rho A}{2}$, $\inf\Phi^{-1}\left([0,\frac{2r}{\rho}]\right)=\left(\frac{2r}{\rho A}\right)^{-\invalpha}$.\\
We thus try, for $r\leq r_{\min}^1\triangleq \min\{\frac{1}{2}\rho A, \frac{1}{2}\nu\rho\}$, to solve the equation 
$\Lambda_r = \frac{1}{K}\left(\frac{2r}{\nu\rho}\right)^{-d}\:\inf\Phi^{-1}\left([0,\frac{2r}{\rho}]\right)$ in order to apply Lemma 6.a. We have $\Lambda_r = \frac{1}{K}\left(\frac{2}{\nu\rho}\right)^{-d}\left(\frac{2}{\rho A}\right)^{-\invalpha}r^{-d-\invalpha}$, which is equivalent to $K\left(\frac{2}{\nu\rho}\right)^{d}\left(\frac{2}{\rho A}\right)^{\invalpha}\Lambda_r = r^{-d-\invalpha}$. With $D_1\triangleq K^\frac{-1}{d+\invalpha}\left(\frac{2}{\nu\rho}\right)^\frac{-d}{d+\invalpha}\left(\frac{2}{\rho A}\right)^{\frac{-1}{1+d\alpha}}$, we have $D_1\Lambda_r^\frac{-1}{d+\invalpha}=r$.\\
Finally, using Lemma 6.a, if $\Lambda\geq\Lambda_{r_{\min}^1}$, we have the bound $\regretl\geq D_1\Lambda^\frac{-1}{d+\invalpha}$. Otherwise, if $\Lambda<\Lambda_{r_{\min}^1}$ we only get that $\regretl\geq r_{\min}^1$.\\

Under \assumsb\\

$\mathbf{d=0:}$ Similarly, we solve $y=\Phi(c)$ for $c\geq 1$. Since $\Phi(c)=Be^{-\frac{c^\beta}{\sigma}}$, we get $c=\left(\sigma \log\left(\frac{B}{y}\right)\right)^\frac{1}{\beta}$ when $B\geq y$. Using this, we have that if $\frac{\nu}{\rho}\left(\frac{4r}{\nu}\right)^\frac{1}{4}\leq B$, ie $r\leq\left(\frac{\rho B}{\nu}\right)^4\frac{\nu}{4}$, then $\inf\Phi^{-1}\left(\left[0,\frac{\nu}{\rho}\left(\frac{4r}{\nu}\right)^\frac{1}{4}\right]\right)=\left(\sigma \log\left(\frac{\rho B}{\nu}\left(\frac{\nu}{4r}\right)^\frac{1}{4}\right)\right)^\frac{1}{\beta}$.

To use Lemma 6.b, we then try to solve, for $r\leq \min\:\{\frac{1}{4}\nu\rho^6,\;\left(\frac{\rho B}{\nu}\right)^4\frac{\nu}{4}\}$, the equation $\Lambda_r= \left(\frac{\log\frac{\nu}{4r}}{4\logrho}-2\right)\:\inf\Phi^{-1}\left(\left[0,\frac{\nu}{\rho}\left(\frac{4r}{\nu}\right)^\frac{1}{4}\right]\right)$. This equation is equivalent to $4^{1+\invbeta}\:\logrho\:\sigma^\frac{-1}{\beta}\Lambda_r=\left(\log\frac{\nu}{4r}-8\logrho\right)\left(\log\left(\left(\frac{B}{\nu}\right)^4\frac{\nu}{4r}\right)\right)^\frac{1}{\beta}$. We define $L_1\triangleq\log\frac{\nu}{4}-8\logrho$, $L_2\triangleq\log\left(\left(\frac{B}{\nu}\right)^4\frac{\nu}{4}\right)$, and $D\triangleq 4^{1+\invbeta}\:\logrho\:\sigma^\frac{-1}{\beta}$ to get it in the form $D\Lambda_r=\left(L_1+\log\frac{1}{r}\right)\left(L_2+\log\frac{1}{r}\right)^\invbeta$. We now also assume $L_1\geq\frac{-1}{2}\log\frac{1}{r}$ and $L_2\geq\frac{-1}{2}\log\frac{1}{r}$ (equivalent to $r\leq e^{2\:L_1}$ and $r\leq e^{2\:L_2}$), to get $D\Lambda_r\geq\left(\frac{1}{2}\log\frac{1}{r}\right)\left(\frac{1}{2}\log\frac{1}{r}\right)^\invbeta$, ie $2D^\frac{\beta}{1+\beta}\Lambda_r^\frac{\beta}{1+\beta}\geq\log\left(\frac{1}{r}\right)$. This gives $e^{-D_2\;\Lambda_r^\frac{\beta}{1+\beta}}\leq r$ with $D_2\triangleq 2D^\frac{\beta}{1+\beta}$.

We now conclude like above, this time using Lemma 6.b. If we define $r_{\min}^2\triangleq \min\{\frac{1}{4}\nu\rho^6,\;\left(\frac{\rho B}{\nu}\right)^4\frac{\nu}{4},\;e^{2\:L_1},\;e^{2\:L_2}\}$, for all $\Lambda\geq\Lambda_{r_{\min}^2}$, we can get the bound $\regretl\geq e^{-D_2\;\Lambda^\frac{\beta}{1+\beta}}$, and $\regretl\geq r_{\min}^2$ otherwise.\\

$\mathbf{d>0:}$  In this case, we simply use that $\inf\Phi^{-1}\left([0,\frac{2r}{\rho}]\right)\geq 1$ by definition of $\Phi$. We try to apply the Lemma 6.a, and solve for $r\leq \frac{1}{2}\nu\rho$,  $\Lambda_r= \frac{1}{K}\left(\frac{2r}{\nu\rho}\right)^{-d}\:\inf\Phi^{-1}\left(\left[0,\frac{\nu}{\rho}\left(\frac{4r}{\nu}\right)^\frac{1}{4}\right]\right)$. We here have $\Lambda_r\geq\frac{1}{K}\left(\frac{2r}{\nu\rho}\right)^{-d}$, which means $\frac{\nu\rho}{2}\:K^\frac{-1}{d}\:\Lambda_r^\frac{-1}{d}\leq r$. With $D_3=\frac{\nu\rho}{2}\:K^\frac{-1}{d}$, we have $D_3\:\Lambda_r^\frac{-1}{d}\leq r$, and we conclude like above with $r_{\min}^3=\frac{1}{2}\nu\rho$.\\

Under \assumsc \\

\textbf{d=0:} We try to apply the Lemma 6.b, and solve, for $r\leq r_{\min}^4=\frac{1}{4}\nu\rho^6$, $\Lambda_r \leq \left(\frac{\log\frac{\nu}{4r}}{4\logrho}-2\right)\:\inf\Phi^{-1}\left(\left[0,\frac{\nu}{\rho}\left(\frac{4r}{\nu}\right)^\frac{1}{4}\right]\right)$. Since the required cost to get any information on the function is $a$, we have that $\inf\Phi^{-1}\left(\left[0,\frac{\nu}{\rho}\left(\frac{4r}{\nu}\right)^\frac{1}{4}\right]\right)=a$, which means that $\Lambda_r=\left(\frac{\log\frac{\nu}{4r}}{4\logrho}-2\right)a$, ie $\log\frac{4}{\nu}+8\logrho+4\logrho\frac{\Lambda_r}{a}=\log\frac{1}{r}$. If we assume $\Lambda_r\geq \frac{a}{4\logrho}\left(\log\frac{4}{\nu}+8\logrho\right)$, we have $8\logrho\frac{\Lambda_r}{a}\geq \log\frac{1}{r}$, and with $D_4=8\logrho\frac{1}{a}$, we finally get $e^{-D_4\:\Lambda_r}\leq r$.

We then conclude like before by applying the Lemma 6.b, and then, for any $\Lambda\geq \max\: \left\{\Lambda_{r_{\min}^4},\;\frac{a}{4\logrho}\left(\log\frac{4}{\nu}+8\logrho\right)\right\}$, we can get the bound $\regretl\geq e^{-D_4\:\Lambda_r}$\\\\

$\mathbf{d>0:}$ The reasoning and the wanted bounds (with $D_3=D_5$) are exactly the same as the case $d>0$ of \assumsb.\\
\end{proof}

\section{Upper bounds}

\begin{proof}[Proof of Proposition \ref{prop:bud}] 

When a cell is opened at fidelity $j$, the maximum budget used for this opening can not exceed $Ke^j$. We can therefore upper bound the budget used for all of the cell opening, ignoring the initial cell opening of $\partitioning_{0,0}$

\begin{align*} 
\sum\limits_{h=1}^{\left\lfloor\hmax\right\rfloor}\sum\limits_{m=1}^{\left\lfloor\frac{\hmax}{h}\right\rfloor}\sum\limits_{j=0}^{\left\lfloor\log\frac{\hmax}{hm}\right\rfloor}Ke^{j} &\leq\sum\limits_{h=1}^{\left\lfloor\hmax\right\rfloor}\sum\limits_{m=1}^{\left\lfloor\frac{\hmax}{h}\right\rfloor}\frac{Ke^{\left\lfloor\log\frac{\hmax}{hm}\right\rfloor+1}}{e-1}\\
&\leq\sum\limits_{h=1}^{\left\lfloor\hmax\right\rfloor}\sum\limits_{m=1}^{\left\lfloor\frac{\hmax}{h}\right\rfloor}\frac{Ke\hmax}{\left(e-1\right)hm}\\
&\leq\sum\limits_{h=1}^{\left\lfloor\hmax\right\rfloor}\frac{Ke\hmax}{\left(e-1\right)h}\left(\log\frac{\hmax}{h}+1\right)\\
&\leq\frac{Ke\hmax}{\left(e-1\right)}\left(\log\hmax+1\right)\left(\log\frac{\hmax}{h}+1\right)\\
&\leq\frac{Ke\hmax}{\left(e-1\right)}\left(\log\Lambda+1\right)^2\\
&\leq \frac{\Lambda}{2}
\end{align*}

The budget used for the initial opening and for the cross-validation can be bounded by
\[\left(K+\log \hmax\right)\hmax\leq K\left(\log\Lambda+1\right)\hmax\leq\frac{\Lambda}{2}\]
This shows that the total budget can be bounded by $\Lambda$.

\end{proof}
\setcounter{lemma}{6}
\begin{lemma}
\label{lem:ub}
Let $\Psi$ be a non-increasing function upper bounding $\Phi$, $j$ a non-negative integer and $\htilde$ a positive real number such that
\begin{enumerate}
  \item  $\Psi(e^j)\leq \nu\rho^{\htilde},$ and
  \item  $\frac{\hmax}{4\htilde e^j}\geq C\rho^{-d{\htilde}}$. 
\end{enumerate}
Then, $\regretl \leq \frac{3\nu}{\rho}\rho^{\htilde} + 2\;\Psi\left(\hmax\right)$.
\end{lemma}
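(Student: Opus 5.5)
The plan is the standard SequOOL/StroquOOL argument: follow the optimal branch at fidelity level $j$, then pay for the final cross-validation. Fix the level $j$ from the hypotheses and write $\epsilon=\nu\rho^{\htilde}\geq\Psi(e^j)$.

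\textbf{Setup: a stable comparison at level $j$.} At level $j$ every evaluation $f_{h,i,j}$ satisfies $|g\circ f_{h,i,j}-\targetf(x_{h,i})|\leq\Phi(e^j)\leq\epsilon$ for a fixed increasing $g=g_{z_{e^j}}$. So when \kometo picks the cell with the highest $f_{h,i,j}$ among candidates at the same level, it picks one whose true value at the representative is within $2\epsilon$ of the best true value among those candidates.

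\textbf{Step 1: the optimal branch is opened at level at least $j$.} Let $\bot_h$ be the depth of the deepest cell containing $\optimum$ that has been opened at fidelity level at least $j$ after the exploration at depth $h$. I will prove by induction on $h\leq\lfloor\htilde\rfloor$ that $\bot_h=h$.

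At depth $h$, consider the values $m\leq m_h:=\lfloor\hmax/(4\htilde e^j)\rfloor$. For these, $hm\leq\hmax/(4e^j)$, hence the fidelity level used is $\lfloor\log(\hmax/(hm))\rfloor\geq j$. Cells opened at a level $j'\geq j$ also have their level-$j$ values set, since opening sets $T_{h+1,i,u}=1$ for all $u\leq j'$. Hence all children of $P_{h-1,i^\star_{h-1}}$, including $P_{h,i^\star_h}$, carry a level-$j$ evaluation. By \assumf, $\targetf(x_{h,i^\star_h})\geq\targetf(\optimum)-\nu\rho^h$.

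Suppose the optimal cell is not opened among the first $m_h$ openings. Each of those openings at level $j'\geq j$ chose, in level-$j'$ ranking, a cell beating $P_{h,i^\star_h}$. Its true value is therefore at least $\targetf(\optimum)-\nu\rho^h-2\Phi(e^{j'})\geq\targetf(\optimum)-3\nu\rho^{h}$, using $\Phi(e^{j'})\leq\Psi(e^j)\leq\nu\rho^{\htilde}\leq\nu\rho^h$ by monotonicity of $\Psi$. So all of them are $3\nu\rho^h$-near-optimal, and there are at most $C\rho^{-dh}\leq C\rho^{-d\htilde}$ such cells. Hypothesis 2 gives $m_h\geq C\rho^{-d\htilde}$, roughly up to floor issues handled by the factor $4$. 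Hence the optimal cell must be opened at level $\geq j$.

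\textbf{Step 2: the level-$j$ candidate is good.} The cell $P_{\lfloor\htilde\rfloor,i^\star}$ is opened at level $\geq j$, so its children, in particular the optimal child at depth $\lfloor\htilde\rfloor+1\geq\htilde$, carry level-$j$ evaluations with true value $\geq\targetf(\optimum)-\nu\rho^{\lfloor\htilde\rfloor+1}$. The candidate $x^c_j$ maximizes $f_{\cdot,\cdot,j}$ over all cells with $T=1$, so
\[
\targetf(x^c_j)\geq\targetf(\optimum)-\nu\rho^{\htilde}\rho^{-1}\cdot\rho-2\epsilon\geq\targetf(\optimum)-\tfrac{3\nu}{\rho}\rho^{\htilde}.
\]

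\textbf{Step 3: cross-validation.} The candidates are compared at cost $\hmax$, where the bias is at most $\Phi(\hmax)\leq\Psi(\hmax)$, again at a single fidelity so that $g$ cancels. The output therefore loses at most $2\Psi(\hmax)$ relative to $x^c_j$, provided $j\leq\jmax$. This holds because hypothesis 2 forces $e^j\leq\hmax$.

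\textbf{Main obstacle.} The hardest part is the induction's bookkeeping. First, cells at depth $h$ carrying level-$j$ values may come from openings at various higher levels. Second, the competing near-optimal cells must be counted with the right bias. Third, the constant $4$ and the floors in $m_h$ must be chosen so that $m_h$ really exceeds the near-optimal count. If direct counting is off, I will count near-optimal cells at threshold $3\nu\rho^h$ rather than $3\nu\rho^{\htilde}$ to absorb the slack.
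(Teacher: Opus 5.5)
There is a genuine gap in Step 1: you count the wrong openings. For $m\leq m_h=\lfloor\hmax/(4\htilde e^j)\rfloor$ and $h\leq\htilde$ you have $\hmax/(hm)\geq 4e^j>e^{j+1}$. So every one of these openings is made at some level $j'\geq j+1$, never at level $j$ itself.

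An opening at level $j'$ only ranks unopened depth-$h$ cells with $T_{h,i,j'}=1$, i.e.\ children of depth-$(h-1)$ cells opened at level at least $j'$. Your induction hypothesis only guarantees that $P_{h-1,i^\star_{h-1}}$ was opened at level at least $j$. If that level is below $j'$, then $P_{h,i^\star_h}$ has no level-$j'$ value and is not in the pool. The opening may then pick an arbitrary, far-from-optimal cell. Hence the claim that each of these openings chose a cell beating $P_{h,i^\star_h}$ in the level-$j'$ ranking is unjustified, and so is the near-optimal count built on it. You did correctly establish that $P_{h,i^\star_h}$ carries a level-$j$ value. But that fact can only be used in rankings performed at level $j$, and your range of $m$ contains none.

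The repair, which is the paper's argument, is to count only the depth-$h$ openings made at level exactly $j$, namely $m\in(\hmax/(he^{j+1}),\,\hmax/(he^{j})]$. Their number is at least
\[
(1-e^{-1})\hmax/(he^j)-1\geq\hmax/(2he^j)-1\geq\hmax/(4he^j)\geq C\rho^{-d\htilde}\geq C\rho^{-dh}.
\]
The middle inequality holds because hypothesis 2 gives $\hmax/(4he^j)\geq C>1$; this is where the factor $4$ is used.

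Since the level is non-increasing in $m$, these openings come after all higher-level ones at depth $h$. So either $P_{h,i^\star_h}$ was already opened at a level $>j$, or it is an unopened candidate with a level-$j$ value in each of these rankings. In the latter case each selected cell satisfies $f_{h,i,j}\geq f_{h,i^\star_h,j}$, hence is $3\nu\rho^h$-near-optimal by your Setup inequality. Together with $P_{h,i^\star_h}$, this yields more than $C\rho^{-dh}$ near-optimal cells, a contradiction.

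With Step 1 fixed this way, your Steps 2 and 3 go through. Using the optimal child at depth $\lfloor\htilde\rfloor+1$ is legitimate and even gives $3\nu\rho^{\htilde}$ rather than $\frac{3\nu}{\rho}\rho^{\htilde}$. For $\htilde\geq 1$, hypothesis 2 gives $\hmax>4e^j$, so $j\leq\jmax$. For $\htilde<1$, the trivial bound $r_\Lambda\leq\nu\leq\frac{\nu}{\rho}\rho^{\htilde}$ suffices.
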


\begin{proof}
We denote by $\partitioning_{\xstar,h'}$ the cell of depth h' containing $\xstar$, $i\star$ the associated index and $h=\lfloor\htilde\rfloor$.\\ 

\paragraph{$\mathbf{Case\;h>0:}$} To prove this lemma, we first want to show that the cell $\partitioning_{\xstar,h}$ is opened with a cost $c_h$ with $c\geq e^{j}$. We do it by induction, and show this is true for any $h'$ such that $h'\leq h$\\
When $h'=0$, the proposition is trivial since there is only one cell of depth 0, opened by the algorithm with $c_0=\hmax\geq e^j$ because of (2) and $\htilde>1$.\\
When $0<h'\leq h$, we assume that $\partitioning_{\xstar,h'-1}$ is opened, and $c_{h'-1}\geq e^j$. We want to show that at least $C\rho^{-dh'}$ cells are opened at a cost superior or equal to $e^j$ and strictly inferior to  $e^{j+1}$. This number $n_{h'}$ of cells is equal to\\
$n_{h'}=\#\{m\in \llbracket 1~;~\frac{\hmax}{h'}  \rrbracket | e^{j+1}>\frac{\hmax}{h'm}\geq e^j\}=\#\{m\in \mathbb{N}^\star | \frac{\hmax}{h'e^{j+1}}<m\leq \frac{\hmax}{h'e^j}\}\geq \frac{\hmax}{h'e^j} - \frac{\hmax}{h'e^{j+1}} -1 \geq \frac{\hmax}{2h'e^j}-1 $\\
Since $\frac{\hmax}{2h'e^j}\geq \frac{\hmax}{2\htilde e^j} \geq 2C\rho^{-d{\htilde}}\geq 2$, we conclude that $n_{h'} \geq \frac{\hmax}{4h'e^j} \geq\frac{\hmax}{4\htilde e^j}\geq C\rho^{-d{\htilde}} \geq C\rho^{-dh'}$.\\
We now suppose that the cell $\partitioning_{\xstar,h'-1}$ was not opened with $c_{h'}\geq e^j$. Then at least $n_{h'}$ cells of depth $h'$ were such that $f_{h',i,j} \geq f_{h',i^\star,j}$, with i denoting the index of any of these cells. This means that, using Assumption 1 and hypothesis (1) of the lemma,

$\sup_{x\in \partitioning_{h',i}} \targetf(x) + \nu\rho^{h'}\geq \targetf(x_{h',i})+\Psi(e^j)\geq f_{z_{e^j}}(x_{h',i})=f_{h',i,j} \geq f_{h',i^\star,j}=f_{z_{e^j}}(x_{h',i^\star})\geq f(x_{h',i^\star}) - \Psi(e^j) \geq f(x_{h',i^\star})-\nu\rho^{h'} \geq \targetf(x^\star) -2\nu\rho^{h'} (\Delta)$ 

absurd since $\ncell_{h'}\leq C\rho^{-dh'} \leq n_{h'}$. We thus have that $\partitioning_{\xstar,h'-1}$ was opened with $c_{h'}\geq e^j$\\\\
We have shown that the cell $\partitioning_{\xstar,h}$ has been opened with a cost $c_h\geq e^j$. This means that the cell has been evaluated at least once for a fidelity of cost $e^{j'}$ with $j'\geq j$. Using the same chain of inequality $(\Delta)$ for h, and $\Psi(e^{j'}) \leq \Psi(e^j)$, we get that the cross-validation candidate $x^c_{j'}$ is such that $\targetf (x^c_j)\geq \targetf (x^\star) - 3\nu\rho^h$. Then, if $x^c_{j''}$ is the returned value, we have that \[\targetf(x^c_{j''}) + \Psi(\hmax) \geq f_{z_{\hmax}} (x^c_{j''}) \geq f_{z_{\hmax}}(x^c_{j'}) \geq \targetf(x^c_{j'}) - \Psi(\hmax) \geq \targetf(x^\star) - \Psi(\hmax) - 3\nu\rho^h\], and we can conclude that $\regretl \leq 3\frac{\nu}{\rho}\rho^{-d{\htilde}}$ since $\htilde \geq h-1$\\
\paragraph{Case h=0}: Since any returned $x^c$ belongs to $\partitioning_{x^\star,0}$, we have that $\regretl \leq \nu\rho^0 \leq \frac{\nu}{\rho}\rho^{\htilde} \leq \frac{3\nu}{\rho}\rho^{\htilde} + 2\;\Psi(\hmax )$
\end{proof}

\begin{proof}[Proof of Theorem \ref{thm:ub}] The proofs for the first two hypothesis are similar since they use the same techniques to apply the lemma\\

\textbf{Under \assumsa} ($\Phi (c)\leq \frac{A}{c^\alpha}$):\\\\
We first try to solve $\frac{\hmax \nu^\invalpha \rho^\frac{h}{\alpha}}{4eA^\invalpha h}=C\rho^{-dh}$ with h unknown. Since this equality is equivalent to $\frac{\hmax \nu^\invalpha\dalpharho}{4CeA^\invalpha h}=\dalpharho h\;e^{\dalpharho h}$, there exists a single positive real number h, which we will name $h_1$, such that the equality is respected, with $h_1=\frac{1}{\dalpharho}W(\frac{\hmax \nu^\invalpha\dalpharho}{4CeA^\invalpha})$. We define the same way $h_2$ as the only solution of the equation $\frac{\hmax}{4h}=C\rho^{-dh}$ (equal to $\frac{\hmax}{4C}$ if $d=0$ and $\frac{1}{d\logrho}W(\frac{\hmax d\logrho}{4C})$ if $d>0$). Note that $h_1$ and $h_2$ are both increasing with the budget. \\
We also define $\Psi$ with $\Psi (x) = \frac{A}{x^\alpha}$ We then discriminate between the two cases:\\

\textbf{First case (High Budget)}: $\nu\rho^{h_1}\leq e^\alpha A$\\
We set $j_1=\lfloor \log(\frac{eA^\invalpha}{\nu^\invalpha \rho^\frac{h_1}{\alpha}})\rfloor$. Thanks to the hypothesis, we have that $j_1$ is a positive integer.\\
We then try to apply Lemma \ref{lem:ub} on $\Psi$, $h_1$ and $j_1$. Since $j_1 > \log(\frac{eA^\invalpha}{\nu^\invalpha \rho^\frac{h_1}{\alpha}})-1 = \log(\frac{A^\invalpha}{\nu^\invalpha \rho^\frac{h_1}{\alpha}})$, we have $\frac{A}{e^{\alpha j_1}} < \nu\rho^{h_1}$ which means $\Psi (e^{j_1}) < \nu\rho^{h_1}$, hypothesis (1).\\
We also have $j_1 \leq \log(\frac{eA^\invalpha}{\nu^\invalpha \rho^\frac{h_1}{\alpha}})$, ie $\frac{1}{e^{j_1}} \geq \frac{\nu^\invalpha \rho^\frac{h_1}{\alpha}}{eA^\invalpha}$, which gives $C\rho^{-dh_1} = \frac{\hmax \nu^\invalpha \rho^\frac{h_1}{\alpha}}{4eA^\invalpha h_1} \leq \frac{\hmax}{4h_1 e^{j_1}}$, hypothesis (2). Applying the lemma then let us obtain the claimed results.

\textbf{Second case (low budget)}: $\nu\rho^{h_1}>e^\alpha A$ \\
We set $j_2=0$ and here try to apply Lemma \ref{lem:ub} on $\Psi$, $h_2$ and $j_2$. We have $h_1\rho^{-dh_1}=\frac{\hmax \nu^\invalpha \rho^\frac{h_1}{\alpha}}{4eA^\invalpha C} > \frac{\hmax}{4C}=h_2\rho^{-dh_2}$, which means $h_1>h_2$ since $h -> h\rho^{-dh}$ is increasing on $\mathbb{R_+}$. This gives $\Psi(e^{j_2}) = A < e^\alpha A < \nu\rho^{h_1} < \nu\rho^{h_2}$ (1) \\
The definition of $h_2$ immediately gives $\frac{\hmax}{4h_2 e^{j_2}}\geq C\rho^{-dh_2}$ (2), and we conclude with the lemma.\\\\

\textbf{Under \assumsb} ($\Phi (c)\leq Be^{\frac{-c^\beta}{\sigma}}$):\\\\

We first define $\abnu$ as equal to $\max(\frac{1}{2\sigma},\log(\frac{B}{\nu}))$. Similarly to assumption 2.a, we then define $h_1$ as the only real positive number h such that $\frac{\hmax}{4he(2\sigma h\logrho )^\invbeta}=C\rho^{-dh}$, and $h_2$ the only real positive number h such that $\frac{\hmax}{4he(2\sigma \abnu)^\invbeta}=C\rho^{-dh}$.\\
\textbf{Value of $h_1$: } We have that $\frac{\hmax}{4Ce(2\sigma \logrho )^\invbeta}=h_1^\frac{1+\beta}{\beta}\rho^{-dh_1}$, which gives $(\frac{\hmax}{4Che})^\frac{\beta}{\beta +1} (\frac{1}{2\sigma \logrho})^\frac{1}{\beta +1}=h_1\rho^{-\frac{\beta}{\beta +1}dh}$. When $d=0$, we then get $h_1=(\frac{\hmax}{4Che})^\frac{\beta}{\beta +1} (\frac{1}{2\sigma \logrho})^\frac{1}{\beta +1}$. When $d>0$, we have $\frac{\beta}{\beta +1}d\logrho (\frac{\hmax}{4Che})^\frac{\beta}{\beta +1} (\frac{1}{2\sigma \logrho})^\frac{1}{\beta +1}=(\frac{\beta}{\beta +1}d\logrho) h_1 e^{\frac{\beta}{\beta +1}d\logrho h_1}$. We finally get $h_1=\frac{\beta +1}{\beta d\logrho}W(\frac{\beta}{\beta +1}d\logrho (\frac{\hmax}{4Che})^\frac{\beta}{\beta +1} (\frac{1}{2\sigma \logrho})^\frac{1}{\beta +1})$\\
\textbf{Value of $h_2$: } The definition gives $\frac{\hmax}{4Ce(2\sigma \abnu)^\invbeta}=h_2\rho^{-dh_2}$. If $d=0$, we have $h_2=\frac{\hmax}{4Ce(2\sigma \abnu)^\invbeta}$. If $d>0$, we get, like for a) $h_2=\frac{1}{d\logrho}W(\frac{\hmax d\logrho}{4Ce(2\sigma \abnu)^\invbeta})$.\\
We also define $\Psi$ with $\Psi (x)=Be^\frac{-x^\beta}{\sigma}$. We again discriminate between the two cases.\\

\textbf{First case (High Budget): } $h_1 \geq \frac{\abnu}{\logrho}$\\
We set $j_1=\lfloor \frac{1}{\beta}log (2\sigma h_1\logrho) \rfloor +1$. Since $2\sigma h_1\logrho \geq 2\sigma \abnu \geq 1$, $j_1\geq 1$\\
We thus try to apply Lemma \ref{lem:ub} on $\Psi$ $h_1$ and $j_1$. Thanks to \[e^{\beta j_1} \geq 2\sigma h_1\logrho \geq \sigma (h_1\logrho + log \frac{B}{\nu})\] we have $Be^\frac{-e^{\beta j_1}}{\sigma} \leq \nu\rho^{h_1}$, ie $\Psi(e^{j_1}) \leq \nu\rho^{h_1}$, hypothesis (1).\\
Since $e^{j_1} \leq e (2\sigma h_1\logrho)^\invbeta$, we have \[\frac{\hmax}{4h_1e^{j_1}}\geq \frac{\hmax}{4h_1e(2\sigma h_1\logrho )^\invbeta}=C\rho^{-dh_1}\;(2)\] 
We then get the wanted results by applying the lemma.\\

\textbf{Second case (Low Budget): } $h_1 < \frac{\abnu}{\logrho}$ \\
We set $j_2=\lfloor \frac{1}{\beta}log (2\sigma \abnu) \rfloor +1 \geq 1$ and apply Lemma \ref{lem:ub} on $\Psi$,$h_2$ and $j_2$.\\
The proof is really similar to the previous case, but we first need to show that $h_1\geq h_2$. This is due to \[h_1\rho^{-dh_1}=\frac{\hmax}{4Ce(2\sigma h_1\logrho)^\invbeta} \geq \frac{\hmax}{4Ce(2\sigma \abnu)^\invbeta}=h_2\rho^{-dh_2}\] which implies $h_1\geq h_2$
We then use this inequality to show \[e^{\beta j_2} \geq 2\sigma \abnu \geq \sigma (h_1\logrho + log \frac{B}{\nu}) \geq \sigma (h_2\logrho + log \frac{B}{\nu})\] and we conclude like before with $\Psi(e^{j_2}) \leq \nu\rho^{h_2}$ (1).\\

Like in the previous case, we have $e^{j_2} \leq e (2\sigma \abnu)^\invbeta$ and thus \[\frac{\hmax}{4h_2e^{j_2}}\geq \frac{\hmax}{4h_2e(2\sigma \abnu )^\invbeta}=C\rho^{-dh_2} \;(2)\] which let us conclude with the lemma.\\

\textbf{Under \assumsc} ($\Phi (c)=0$ for all $c\geq a$)\\\\
We define $h_1$ as the only real solution to the equation $\frac{\hmax}{4aeh}=C\rho^{-dh}$. Similarly to the two previous cases, we get $h_1=\frac{\hmax}{4Cae}$ when d=0 and $h_1=\frac{1}{d\logrho}W(\frac{\hmax d\logrho}{4Cae})$ when $d>0$.\\\\
\textbf{If} $\mathbf{\hmax\geq a:}$\\\\
We now define $\Psi$ with $\Psi (x) = \begin{cases} +\infty & \text{if } x<a \\ 0 & \text{if } x\geq a \end{cases}$, $j_1$ with $j_1=\lfloor \log(ea)\rfloor$ and we try to apply Lemma \ref{lem:ub} on $\Psi$, $h_1$ and $j_1$.\\
Since $e^{j_1}\geq a$, $\Psi (e^{j_1}) \leq \nu\rho^{h_1}$ (1). Using $e^{j_1} \leq ea$, we have \[\frac{\hmax}{4h_1e^{j_1}}\geq \frac{\hmax}{4h_1ea}=C\rho^{-dh_1} (2)\]. We then use the lemma (and $\Psi (\hmax)=0$, but in its proof, since we can upper bound $\Psi(e^{j'})$ by 0 in this specific case, we can loose a 3 factor in the result and just keep $\frac{\nu}{\rho}\rho^{h_1}$\\
\textbf{If} $\mathbf{\hmax< a:}$\\
We then have $h_1<1$. Since $\regretl\leq \nu$, as explained in the lemma proof, we have $\regretl\leq\frac{\nu}{\rho}\rho^{h_1}$ which concludes.
\end{proof}

\begin{proof} [Proof of Corollary \ref{co:ub}]
For the proof of the different bounds, we use that $W(x)\geq \log\,x\,-\, \log\, \log\, x$ for $x\geq e$, as shown in \cite{hoorfar2008inequalities}. We can also notice that for $x\leq e$, we have $W(x)\geq x/e$.\\
We remind that the value of $\hmax$ is bounded by \[\frac{(e-1)\Lambda}{2Ke\left(\log\ \Lambda+1\right)^2}\]
\textbf{Under \assumsa} ($\Phi (c)\leq \frac{A}{c^\alpha}$):\\\\

\textbf{High Budget:} We define $\lambdac=\frac{\hmax \nu^\invalpha\dalpharho}{4CeA^\invalpha}$.\\ If $\lambdac\geq e$ then

\begin{align*} 
\regretl&\leq \frac{3\nu}{\rho}\rho^{h_1} +2\frac{A}{\hmax^\alpha}\\
&=\frac{3\nu}{\rho}\rho^{\frac{1}{\dalpharho}W(\lambdac)} +2\frac{A}{\hmax^\alpha}\\
&\leq\frac{3\nu}{\rho}\rho^{\frac{1}{\dalpharho}\log\left(\frac{\lambdac}{\log\lambdac}\right)} +2\frac{A}{\hmax^\alpha}\\
&=\frac{3\nu}{\rho}\left(\frac{\lambdac}{\log\lambdac}\right)^{\frac{-1}{d+\invalpha}} +2\frac{A}{\hmax^\alpha}\\
\end{align*}

If $\lambdac\leq e$, then  

\begin{align*} 
\regretl&\leq \frac{3\nu}{\rho}\rho^{h_1} +2\frac{A}{\hmax^\alpha}\\
&=\frac{3\nu}{\rho}\rho^{\frac{1}{\dalpharho}W(\lambdac)} +2\frac{A}{\hmax^\alpha}\\
&=\frac{3\nu}{\rho}\rho^{\frac{1}{\dalpharho}\frac{\lambdac}{e}} +2\frac{A}{\hmax^\alpha}\\
\end{align*}

\textbf{Low Budget:} If $d=0$, the exponential upper bound is given by 
\[\regretl\leq\frac{3\nu}{\rho}\rho^{h_2} +2\frac{A}{\hmax^\alpha}\leq \frac{3\nu}{\rho}\rho^{\frac{\hmax}{4C}} +2\frac{A}{\hmax^\alpha}\]

If $d>0$, we define $\lambdac=\frac{\hmax d\logrho}{4C}$\\
Then, if $\lambdac\geq e$

\begin{align*} 
\regretl&\leq \frac{3\nu}{\rho}\rho^{h_1} +2\frac{A}{\hmax^\alpha}\\
&=\frac{3\nu}{\rho}\rho^{\frac{1}{d\logrho}W(\lambdac)} +2\frac{A}{\hmax^\alpha}\\
&\leq\frac{3\nu}{\rho}\rho^{\frac{1}{d\logrho}\log\left(\frac{\lambdac}{\log\lambdac}\right)} +2\frac{A}{\hmax^\alpha}\\
&=\frac{3\nu}{\rho}\left(\frac{\lambdac}{\log\lambdac}\right)^{\frac{-1}{d}} +2\frac{A}{\hmax^\alpha}\\
\end{align*}

Else, if $\lambdac<e$

\begin{align*} 
\regretl&\leq \frac{3\nu}{\rho}\rho^{h_1} +2\frac{A}{\hmax^\alpha}\\
&=\frac{3\nu}{\rho}\rho^{\frac{1}{d\logrho}W(\lambdac)} +2\frac{A}{\hmax^\alpha}\\
&\leq\frac{3\nu}{\rho}\rho^{\frac{1}{d\logrho}\frac{\lambdac}{e}} +2\frac{A}{\hmax^\alpha}\\
\end{align*}

\textbf{Under \assumsb} ($\Phi (c)\leq Be^{\frac{-c^\beta}{\sigma}}$):\\\\
\textbf{High Budget:} If $d=0$, then we directly get 
\[\regretl \leq \frac{3\nu}{\rho}\rho^{h_1} + 2Be^{\frac{-\hmax^\beta}{\sigma}}\leq \frac{3\nu}{\rho}\rho^{(\frac{\hmax}{4Che})^\frac{\beta}{\beta +1} (\frac{1}{2\sigma \logrho})^\frac{1}{\beta +1}} + 2Be^{\frac{-\hmax^\beta}{\sigma}}\]

If $d>0$, we define $\lambdac=\left(\frac{\beta}{\beta +1}d\logrho\left(\frac{1}{2\sigma \logrho}\right)^\frac{1}{\beta +1}\right)^\frac{\beta+1}{\beta}\frac{\hmax}{4Che}$\\
Then, if $\lambdac\geq e$
\begin{align*} 
\regretl&\leq \frac{3\nu}{\rho}\rho^{h_1}+2Be^{\frac{-\hmax^\beta}{\sigma}}\\
&=\frac{3\nu}{\rho}\rho^{\frac{\beta +1}{\beta d\logrho}W(\lambdac^\frac{\beta}{\beta+1})} +2Be^{\frac{-\hmax^\beta}{\sigma}}\\
&\leq\frac{3\nu}{\rho}\rho^{\frac{\beta +1}{\beta d\logrho}\log\left(\frac{\lambdac^\frac{\beta}{\beta+1}}{\log\lambdac}\right)} +2Be^{\frac{-\hmax^\beta}{\sigma}}\\
&=\frac{3\nu}{\rho}\left(\frac{\lambdac}{\left(\log\lambdac\right)^\frac{\beta+1}{\beta}}\right)^{\frac{-1}{d}} +2Be^{\frac{-\hmax^\beta}{\sigma}}\\
\end{align*}

Else, if $\lambdac< e$

\begin{align*} 
\regretl&\leq \frac{3\nu}{\rho}\rho^{h_1}+2Be^{\frac{-\hmax^\beta}{\sigma}}\\
&=\frac{3\nu}{\rho}\rho^{\frac{\beta +1}{\beta d\logrho}W(\lambdac^\frac{\beta}{\beta+1})} +2Be^{\frac{-\hmax^\beta}{\sigma}}\\
&\leq\frac{3\nu}{\rho}\rho^{\frac{\beta +1}{\beta d\logrho}\frac{\lambdac^\frac{\beta}{\beta+1}}{e}} +2Be^{\frac{-\hmax^\beta}{\sigma}}\\
\end{align*}

\textbf{Low Budget:} If $d=0$, we get 

\[\regretl \leq \frac{3\nu}{\rho}\rho^{h_1} + 2Be^{\frac{-\hmax^\beta}{\sigma}}\leq \frac{3\nu}{\rho}\rho^{\frac{\hmax}{4Ce(2\sigma \abnu)^\invbeta}} + 2Be^{\frac{-\hmax^\beta}{\sigma}}\]

If $d>0$, we define $\lambdac=\frac{\beta}{\beta +1}d\logrho (\frac{\hmax}{4Che})^\frac{\beta}{\beta +1} (\frac{1}{2\sigma \logrho})^\frac{1}{\beta +1}$\\
Then, if $\lambdac\geq e$

\begin{align*} 
\regretl&\leq \frac{3\nu}{\rho}\rho^{h_1} +2Be^{\frac{-\hmax^\beta}{\sigma}}\\
&=\frac{3\nu}{\rho}\rho^{\frac{1}{d\logrho}W(\lambdac)} +2Be^{\frac{-\hmax^\beta}{\sigma}}\\
&\leq\frac{3\nu}{\rho}\rho^{\frac{1}{d\logrho}\log\left(\frac{\lambdac}{\log\lambdac}\right)} +2Be^{\frac{-\hmax^\beta}{\sigma}}\\
&=\frac{3\nu}{\rho}\left(\frac{\lambdac}{\log\lambdac}\right)^{\frac{-1}{d}} +2Be^{\frac{-\hmax^\beta}{\sigma}}\\
\end{align*}

Else, if $\lambdac< e$

\begin{align*} 
\regretl&\leq \frac{3\nu}{\rho}\rho^{h_1} +2Be^{\frac{-\hmax^\beta}{\sigma}}\\
&=\frac{3\nu}{\rho}\rho^{\frac{1}{d\logrho}W(\lambdac)} +2Be^{\frac{-\hmax^\beta}{\sigma}}\\
&\leq\frac{3\nu}{\rho}\rho^{\frac{1}{d\logrho}\frac{\lambdac}{e}} +2Be^{\frac{-\hmax^\beta}{\sigma}}\\
\end{align*}

\textbf{Under \assumsc} ($\Phi (c)=0$ for all $c\geq a$)

If $d=0$, we get
\[\regretl \leq \frac{\nu}{\rho}\rho^{h_1}\leq \frac{\nu}{\rho}\rho^{\frac{\hmax}{4Cae}}\]

If $d>0$, we define $\lambdac=\frac{\hmax d\logrho}{4Cae}$\\
Then, if $\lambdac\geq e$

\begin{align*} 
\regretl&\leq \frac{\nu}{\rho}\rho^{h_1}\\
&=\frac{\nu}{\rho}\rho^{\frac{1}{d\logrho}W(\lambdac)}\\
&\leq\frac{\nu}{\rho}\rho^{\frac{1}{d\logrho}\log\left(\frac{\lambdac}{\log\lambdac}\right)}\\
&=\frac{\nu}{\rho}\left(\frac{\lambdac}{\log\lambdac}\right)^{\frac{-1}{d}}\\
\end{align*}

Else, if $\lambdac< e$
\begin{align*} 
\regretl&\leq \frac{\nu}{\rho}\rho^{h_1}\\
&=\frac{\nu}{\rho}\rho^{\frac{1}{d\logrho}W(\lambdac)}\\
&\leq\frac{\nu}{\rho}\rho^{\frac{1}{d\logrho}\frac{\lambdac}{e}}\\
\end{align*}

\end{proof}

\end{document}